
\newif\ifextras
\extrastrue
\newif\iflong
\longtrue
\def\final{1}  

\newif\ifshort

\iflong
\shortfalse
\else
\shorttrue
\fi

\newif\ifcolt
\coltfalse
\ifcolt
\documentclass[letterpaper]{article}
  \usepackage{aaai17}
  \usepackage{times}
  \usepackage{helvet}
  \usepackage{courier}
  \setlength{\pdfpagewidth}{8.5in}
  \setlength{\pdfpageheight}{11in}
  \setcounter{secnumdepth}{1}
  \usepackage{amsthm} 
\else
  \documentclass{article}
  \newcommand{\numberofauthors}[1]{} 
  \usepackage{amsthm} 
  \usepackage{hyperref}
  \usepackage{fullpage}
\fi

\iflong

\else

\fi

\usepackage{amsmath, amssymb, microtype}
\usepackage{bbm}
\usepackage{latexsym}
\usepackage{color}
\usepackage{caption}
\usepackage{subcaption}

\usepackage{silence}
\WarningFilter{latex}{Text page}

\usepackage{algorithm}
\usepackage{algpseudocode}

\usepackage{float}

\floatstyle{ruled}
\newfloat{subroutine}{htbp}{loa}
\floatname{subroutine}{Subroutine}
\algdef{SE}[myFOR]{myFor}{myEndFor}[1]{\algorithmicfor\ #1\ \algorithmicdo}{\algorithmicend\ \algorithmicfor}%

\usepackage{mathtools}
\usepackage{amsfonts}
\usepackage{thmtools}
\usepackage{thm-restate}
\usepackage{verbatim}
\usepackage{multirow}
\usepackage{url}


\ifnum\final=0  
\newcommand{\lnote}[1]{[{\small Luis: \bf #1}]}
\newcommand{\jnote}[1]{[{\small Joe: \bf #1}]}
\newcommand{\nnote}[1]{[{\small Navin: \bf #1}]}
\newcommand{\anonnote}[1]{[{\small anon: \bf #1}]}
\newcommand{\sidecomment}[1]{\marginpar{\tiny #1}}
\newcommand{\details}[1]{{\color{blue}\ [[#1]] }}
\else 
\newcommand{\lnote}[1]{}
\newcommand{\jnote}[1]{}
\newcommand{\nnote}[1]{}
\newcommand{\anonnote}[1]{}
\newcommand{\sidecomment}[1]{}
\newcommand{\details}[1]{}
\fi  
\setcounter{secnumdepth}{2}


\ifcolt
\pdfinfo{
  /Title (Heavy-tailed Analogues of the Covariance Matrix for ICA)
  /Author (Joseph Anderson, Navin Goyal, Anupama Nandi, Luis Rademacher)
  /Keywords (Machine Learning Methods, Machine Learning Applications, Independent Component Analysis, Centroid Body, Heavy-Tailed Distributions)
}
\title{Heavy-Tailed Analogues of the Covariance Matrix for ICA}
\author{ Joseph Anderson \\
The Ohio State University\\
andejose@cse.ohio-state.edu\\
\And
Navin Goyal \\
Microsoft Research\\
navingo@microsoft.com\\
\And
Anupama Nandi\\
The Ohio State University\\
nandi.10@osu.edu\\
\And
Luis Rademacher\\
University of California, Davis\\
lrademac@ucdavis.edu
}

\declaretheorem[name=Theorem]{theorem}
\newtheorem{lemma}[theorem]{Lemma}
\newtheorem{conjecture}[theorem]{Conjecture}
\newtheorem{proposition}[theorem]{Proposition}
\newtheorem{example}[theorem]{Example}
\newtheorem{corollary}[theorem]{Corollary}
\newtheorem{definition}[theorem]{Definition}
\newtheorem{fact}[theorem]{Fact}
\newtheorem{remark}[theorem]{Remark}

\newcommand{\mykeywords}[1]{\begin{keywords}#1\end{keywords}}

\else

\title{Heavy-Tailed Analogues of the Covariance Matrix for ICA}

\author{ Joseph Anderson\footnote{The Ohio State University, Department of Computer Science and Engineering. \texttt{andejose@cse.ohio-state.edu}} \and Navin Goyal\footnote{Microsoft Research, India. \texttt{navingo@microsoft.com}} \and Anupama Nandi\footnote{The Ohio State University, Department of Computer Science and Engineering. \texttt{nandi.10@osu.edu}}\and Luis Rademacher\footnote{University of California, Davis, Mathematics Department. \texttt{lrademac@ucdavis.edu}} }

\declaretheorem[name=Theorem]{theorem}
\newtheorem{lemma}[theorem]{Lemma}

\newtheorem{proposition}[theorem]{Proposition}

\newtheorem{definition}[theorem]{Definition}

\newcommand{\mykeywords}[1]{}
\fi

\newenvironment{proofidea}{\noindent{\textit{Proof idea.}}}{\hfill$\square$\medskip}

\renewcommand{\dim}{{n}}

\newcommand{\RR}{\ensuremath{\mathbb{R}}}
\newcommand{\QQ}{\ensuremath{\mathbb{Q}}}

\newcommand{\expectation}{\operatorname{\mathbb{E}}}
\newcommand{\e}{\expectation}
\newcommand{\conv}{\operatorname{conv}}
\newcommand{\suchthat}{\mathrel{:}}
\newcommand{\norm}[1]{{\lVert#1\rVert}}
\newcommand{\diag}{\operatorname{diag}}
\newcommand{\eps}{\epsilon}
\newcommand{\ind}{\mathbbm{1}}

\newcommand{\cov}{\operatorname{Cov}}

\newcommand{\poly}{\operatorname{poly}}

\newcommand{\abs}[1]{\lvert#1\rvert}

\newcommand{\noname}[1]{}

\newcommand{\polar}{\circ}

\newcommand{\measure}{\mathbb{P}}
\newcommand{\pr}{\mathbb{P}}
\newcommand{\inner}[2]{\langle{#1},{#2}\rangle}

\newcommand{\E}{\mathbb{E}}
\newcommand{\R}{\mathbb{R}}

\newcommand{\centroid}{\Gamma}

\newcommand{\ud}{\mathop{}\!\mathrm{d}}

\DeclareMathOperator{\sign}{sgn}

\iflong
\else
\renewenvironment{proof}{\expandafter\comment}{\expandafter\endcomment}
\fi

\usepackage[capitalise]{cleveref}


\allowdisplaybreaks

\begin{document}
\maketitle



\begin{abstract}
Independent Component Analysis (ICA) is the problem of learning a square matrix $A$%
, given samples of $X=AS$, 
where $S$ is a random vector with independent coordinates.
Most existing algorithms are provably efficient only when each $S_i$ has finite and moderately valued fourth moment. 
However, there are practical applications where this assumption need not be true, such as speech and finance. 
Algorithms have been proposed for heavy-tailed ICA, but they are not practical, using random walks and the full power of the ellipsoid algorithm multiple times.
The main contributions of this paper are: 

(1) A practical algorithm for heavy-tailed ICA that we call HTICA. 
We provide theoretical guarantees and show that it outperforms other algorithms in some heavy-tailed regimes, both on real and synthetic data.
Like the current state-of-the-art, the new algorithm is based on the centroid body (a first moment analogue of the covariance matrix). 
Unlike the state-of-the-art, our algorithm is practically efficient. To achieve this, we use explicit analytic representations of the centroid body, which bypasses the use of the ellipsoid method and random walks.

(2) We study how heavy tails affect different ICA algorithms, including HTICA. Somewhat surprisingly, we show that some algorithms that use the covariance matrix or higher moments can successfully solve a range of ICA instances with infinite second moment. We study this theoretically and experimentally, with both synthetic and real-world heavy-tailed data.
\end{abstract}
  
\frenchspacing

\section{Introduction}\label{sec:intro}

Independent component analysis (ICA) is a computational and statistical technique with applications in areas ranging from signal processing to machine learning and more. 
Formally, if $S$ is an $n$-dimensional random vector with independent coordinates and $A\in \RR^{n \times n}$ is invertible, then the ICA problem is to estimate $A$ given access to i.i.d. samples of the mixed signals $X = AS$. We say that $X$ is generated by 
an \emph{ICA model} $X=AS$. 
The recovery of $A$ (the \emph{mixing matrix}) is possible only up 
to scaling and permutation of the columns. Moreover, for the recovery to be possible, the distributions of the random
variables $S_i$ must not be Gaussian (except possibly one of them). 
Since its inception in the eighties (see 
\cite{ComonJutten} for historical remarks), ICA has been thoroughly 
studied and a vast literature exists (e.g. \cite{ICA01,ComonJutten}). 
The theory is well-developed and \emph{practical} algorithms---e.g., 
FastICA \cite{fastica99}, JADE \cite{CardosoS93}---are now available along with implementations,
e.g. \cite{ICALab}.
However, to our knowledge, rigorous complexity analyses of these assume that the fourth moment of each component is finite: $\E( S_i^4 )< \infty$.
If at least one of the independent components does not satisfy this assumption we will say that the input is in the \emph{heavy-tailed regime}.
Many ICA algorithms first preprocess the data to convert the given ICA model into another one where the mixing matrix $A$ has orthogonal columns; this step is often called \emph{whitening}. We will instead call it \emph{orthogonalization}, as this describes more precisely the desired outcome. Traditional whitening is a second order method that may not make sense in the heavy-tailed regime.
In this regime, it is not clear how the existing algorithms would 
perform, because they depend on empirical estimation of 
various statistics of the data such as the covariance matrix or the fourth cumulant 
tensor, which diverge in general for heavy-tailed data.
For example, for the covariance matrix in the mean-0 case this is done by taking the empirical average
$(1/N) \sum_{i=1}^{N} x(i) x(i)^T$
where the $\{x(i)\}$ are i.i.d. samples of $X$. 
ICA in the heavy-tailed regime is of considerable interest, directly 
(e.g., \cite{Kidmose01,KidmoseThesis,shereshevski2001super,ChenBickel04,chen2005consistent,sahmoudi2005blind,wang2009ica,eriksson2001novel,portfolio_extreme,bermond1999approximate}) and indirectly 
(e.g., \cite{bickson,gael2009infinite,welling2002learning}) and has 
applications in speech and finance. 
We also mention an informal connection with robust statistics: Algorithms
solving heavy-tailed ICA might work by focusing on samples in a small 
(but high probability) region to get reliable statistics about the data and avoid the instability of the tail. 
Thus, if the data has
outliers, the outliers are less likely to affect such an algorithm.

Recent theoretical work \cite{anon_htica} proposed a polynomial time algorithm for ICA that works in the regime where each component $S_i$ has finite $(1+\gamma)$-moment for $\gamma > 0$. 
This algorithm follows the two phases of several ICA algorithms: 
(i) Orthogonalize the independent components. 
The purpose of this step is to apply an affine transformation to the samples from $X$ so that the resulting samples correspond to an ICA model where the unknown matrix $A$ has orthogonal columns.
(ii) Learn the matrix with orthogonal columns. 
Each of these two phases required new techniques: 
(1) \emph{Orthogonalization via uniform distribution in the centroid body.} 
The input is assumed to be samples from an ICA model $X = AS$ where each $S_i$ is symmetrically distributed (w.l.o.g, see Sec.~\ref{sec:prelims}) and has at least $(1+\gamma)$-moments. 
The goal is to construct an \emph{orthogonalization} matrix $B$ so that $BA$ has orthogonal columns. 
In \cite{anon_htica}, the inverse of the square root of the covariance matrix of the uniform distribution in the centroid body is one such matrix. 
(2) \emph{Gaussian damping.} 
The previous step allows one to assume that the mixing matrix $A$ is orthogonal.
The modified second step is: 
If $X$ has density $\rho_X(t)$ for
$t \in \R^n$, then the algorithm constructs another ICA model $X_R=AS_R$ where $X_R$ has pdf proportional to
$\rho_X(t) \exp({-\norm{t}_2^2/R^2})$, where $R>0$ is a parameter chosen by the algorithm. This explains the term 
Gaussian damping.
This achieves two goals: 
(1) All moments of $X_R$ and $S_R$ are finite. 
(2) The product structure of is retained.
This follows from two facts: $A$ has orthogonal columns, and the Gaussian has independent components in any orthonormal basis.
Because of these properties, the model can be solved by traditional ICA algorithms. 
 
The algorithm in \cite{anon_htica} is theoretically efficient but impractical.
Their orthogonalization 
uses the ellipsoid algorithm for linear programming, which is not practical. It is not clear how to replace their use of the ellipsoid algorithm by practical linear programming tools, as their algorithm only has oracle access to a sort of dual and not an explicit linear program. 
Moreover, their orthogonalization technique uses samples uniformly distributed in the centroid body, generated by a random walk. 
This is computationally efficient in theory but, to the best of our knowledge, only efficient in practice for moderately low dimension.

\textbf{Our contributions.} 
Our contributions are experimental and theoretical. We provide a new and practical ICA algorithm, HTICA, building upon the previous theoretical work in \cite{anon_htica}.
HTICA works as follows:
  (1) Compute an orthogonalization matrix $B$. 
  (2) Pre-multiply samples by $B$ to get an orthogonal model.
  (3) Damp the data, run an existing ICA algorithm.
For step (1), we propose two theoretically sound and practically efficient ways below, \emph{orthogonalization via centroid body scaling} and \emph{orthogonalization via covariance}.
Our algorithm is simpler and more efficient, but needs a more technical justification than the method in \cite{anon_htica}.
We demonstrate the effectiveness of HTICA on both synthetic and real-world data. 

\emph{Orthogonalization via centroid body scaling.} 
We propose a more practical orthogonalization matrix than the one from  \cite{anon_htica} (orthogonalization via the uniform distribution in the centroid body, mentioned before).
First, consider the centroid body of random vector $X$, denoted $\Gamma X$ (this is really a function of the \emph{distribution} of $X$; formal definition in 
Sec.~\ref{sec:prelims}). For intuition, it is helpful to think of the centroid body as an ellipsoid whose axes are aligned with the independent components of $X$.
  The centroid body is in general not an ellipsoid, but it has certain symmetries aligned with the independent components. 
Let random vector $Y$ be a scaling of $X$ along every ray so that points at infinity are mapped to the boundary of $\Gamma X$, the origin is mapped to itself and the scaling interpolates smoothly. One such scaling is obtained in the following way: 
It is helpful to consider how far a point is in its ray with respect to the boundary of $\Gamma X$. This is given by the \emph{Minkoswki functional} of $\Gamma X$, denoted $p:\RR^n \to \RR$, which maps the boundary of $\Gamma X$ to 1 and interpolates linearly along every ray.
We can then achieve the desired scaling by first mapping a given point to the boundary point on its ray (the mapping $x \mapsto x/p(x)$) and then using the function $\tanh$, which maps $[0,\infty)$ to $[0,1]$ with $\tanh(0)=0$ and $\lim_{x \to \infty} \tanh(x) = 1$ to determine the final scale along the ray, namely, $\tanh p (x)$.
More formally, our scaling is the following: 
Let $Y$ be $\frac{\tanh p(X)}{p(X)} X $.
We show in Sec.~\ref{sec:centroidbody} that $B = \cov(Y)^{-1/2}$ is an orthogonalization matrix when $\cov(Y)$ is invertible.
In order to make this practical, one needs a practical estimator of the Minkowski functional of $\Gamma X$ from a sample of $X$.
In Sec.~\ref{sec:centroidbody} and \ref{sec:direct_membership}, we describe such an  algorithm and provide a theoretical justification, including finite sample estimates.
The proposed algorithm is much simpler and practical than the one described in \cite{anon_htica}. 
In particular, it avoids the use of the ellipsoid algorithm by the use of a closed-form linear programming representation of the centroid body (Prop.~\ref{prop:dualcharacterization}, Lemma \ref{lemma:centroid-lp}) and new approximation guarantees between the empirical (sample estimate) and true centroid body of a heavy-tailed distribution.
In Sec.~\ref{sec:centroidbody}, we discuss our practical implementation and show results where orthogonalization via centroid body scaling produces results with smaller error.

\emph{Orthogonalization via covariance.}
Previously, (e.g., in \cite{ChenBickel04}), the empirical covariance matrix was used for whitening in the 
heavy-tailed regime and, surprisingly, worked well in some situations. Unfortunately, the understanding of this was 
quite limited
. We give a theoretical explanation for this phenomenon in a fairly general heavy-tailed regime:
Covariance-based orthogonalization works well when each component $S_i$ has finite $(1+\gamma)$-moment, where $\gamma > 0$. 
We also study this algorithm in experimental settings. As we will see, while 
orthogonalization via covariance improves over previous algorithms, in general 
orthogonalization 
via centroid body has better performance because it has better numerical stability; but there are some situations
where orthogonalization via covariance matrix is better. 

\emph{Empirical Study.}
We perform experiments on both synthetic and real data to see the effect of heavy-tails on ICA.

In the synthetic data setting, we generate samples from a fixed heavy-tailed distribution
and study how well the algorithm can recover a random mixing matrix
(Sec.~\ref{sec:experiments}).
To study the algorithm with real data, we use recordings of human speech
provided by \cite{uky_data}.
This involves a room with different arrangements of microphones, and six humans speaking independently.
The speakers are recorded individually, so we can artificially mix them and have access to a ground truth.
We study the statistical properties of the data, observing that it does indeed behave as if the underlying processes are heavy-tailed.
The performance of our algorithm shows improvement over using FastICA on its own.

\iflong
\else
See supplementary material for proofs and extra detail.
\fi

\section{Preliminaries}\label{sec:prelims}

Heavy-tailed distributions arise in a wide range of applications (e.g., \cite{nolan:2015}). 
They are characterized by the slow decay of their tails. 
Examples of heavy-tailed distributions include the Pareto and log-normal distributions. 

We denote the pdf of random variable $Z$ by $\rho_Z$.
We will assume that our distributions are symmetric, that is $\rho(x)=\rho(-x)$ for $x \in \R$. 
As observed in \cite{anon_htica}, this is without
loss of generality for our purposes. This follows from the fact that if $X=AS$ is an ICA model, and if we let
$X'=AS'$ be an i.i.d. copy of the same model, then $X-X' = A (S-S')$ is an ICA model with components of
$S-S'$ having symmetric pdfs. One further needs to check that if the components of $S$ are away 
from Gaussians then the same holds for $S-S'$; see \cite{anon_htica}.
We formulate our
algorithms for the symmetric case; the general case immediately reduces to the symmetric case.



For $K \subseteq \RR^n$, $K_\eps$ denotes the set of points that are at distance at most $\eps$ from $K$.
The set $K_{-\eps}$ is all points for which an $\eps$-ball around them is still contained in $K$.
\iflong 
The $\dim$-dimensional $\ell_p$ ball is denoted as $B_p^\dim$.
\fi

An important related family of distributions
is that of stable distributions (e.g., \cite{nolan:2015}). 
In general, the density of a stable distribution has no closed form, but is fully defined by four real-valued parameters.
Some stable distributions do admit a closed form, such as the Cauchy and Gaussian distributions.
For us the most important parameter is $\alpha \in (0,2]$, known as the stability parameter; we will think of the other three parameters as being fixed to constants. 

We use the notation $\poly(\cdot)$ to indicate a function which is asymptotically upper bounded by a polynomial expression of the given variables.

\iflong 
If $\alpha=2$, the distribution is Gaussian (the only non-heavy-tailed stable distribution), 
and if $\alpha=1$, it is the Cauchy distribution. 

\fi

\begin{definition}[Centroid body
]\label{def:centroid-body2}
Let $X \in \RR^\dim$ be a random vector with finite first moment, that is, for all $u \in \RR^\dim$ we have $\e (\abs{\inner{u}{X}}) < \infty$. 
The \emph{centroid body} of $X$ is the compact convex set, denoted $\Gamma X$, whose support function is $h_{\Gamma X}(u) = \e (\abs{\inner{u}{X}})$.
For a probability measure $\measure$, we define $\Gamma \measure$, the centroid body of $\measure$, as the centroid body of any random vector distributed according to $\measure$.
\end{definition}

Note that for the centroid body to be well-defined, the mean of the data must be finite.
This excludes, for instance, the Cauchy distribution from consideration in the present work.

\section{HTICA and experiments}\label{sec:experiments}

In this section, we show experimentally that heavy-tailed data poses a significant challenge for current ICA algorithms, and compare them with HTICA in different settings.
We observe some clear situations where heavy-tails seriously affect the standard ICA algorithms, and that these problems are frequently avoided by using the heavy-tailed ICA framework.
In some cases, HTICA does not help much, but maintains the same performance of plain FastICA.

To generate the synthetic data, we create a simple heavy-tailed density function
$f_{\eta}(x)$ proportional to ${(\abs{x}+1.5)^{-\eta}}$,
which is symmetric, and for $\eta > 1$, $f_\eta$ is the density of a distribution which has finite $k<\eta-1$ moment.
The signal $S$ is generated with each $S_i$ independently distributed from $f_{\eta_i}$.
The mixing matrix 
$A \in \RR^{\dim \times \dim}$ is generated with each coordinate i.i.d. $\mathcal{N}(0,1)$, columns normalized to unit length.
To compare the quality of recovery, the columns of the estimated mixing matrix, $\tilde{A}$ are permuted to align with the closest matching column of $A$, via the Hungarian algorithm.
We use the Frobenius norm to measure the error, but all experiments were also performed using the well-known Amari index \cite{amari1996new}; the results have similar behavior and are not presented here.


\subsection{Heavy-tailed ICA when $A$ is orthogonal: Gaussian damping and experiments}
\label{sec:ica-orthogonal}

Focusing on the third step above, where the mixing matrix already has orthogonal columns, ICA algorithms already suffer dramatically from the presence of heavy-tailed data.
As proposed in \cite{anon_htica}, Gaussian damping is a preprocessing technique that converts data from an ICA model $X=AS$, where 
$A$ is unitary (columns are orthogonal with unit $l_2$-norm)
to data from a related ICA model $X_R = A S_R$, where $R > 0$ is a parameter to be chosen.
The independent components of $S_R$ have finite moments of all orders and so the existing algorithms can 
estimate $A$.


Using samples of $X$, we construct the damped random variable $X_R$, with pdf $\rho_{X_R}(x) \propto \rho_{X}(x) \exp({-\norm{x}^2/R^2})$.
To normalize the right hand side, we can estimate
\[K_{X_R} = \e \exp({-\norm{X}^2/R^2})\]
so that
\[ \rho_{X_R}(x) =  \rho_{X}(x) \exp({-\norm{x}^2/R^2})/K_{X_R}. \]
If $x$ is a realization of $X_{R}$, then $s = A^{-1}x$ is a realization of the random variable $S_R$ and we have that $S_R$ has pdf $\rho_{S_R}(s) = \rho_{X_R}(x)$.
To generate samples from this distribution, we use rejection sampling on samples from $\rho_{X}$. 
When performing the damping, we binary search over $R$ so that about 25\% of the samples are rejected.
For more details about the technical requirements for choosing $R$, see \cite{anon_htica}.

\begin{figure*}[t]
  \centering
    \includegraphics[width=0.35\textwidth]{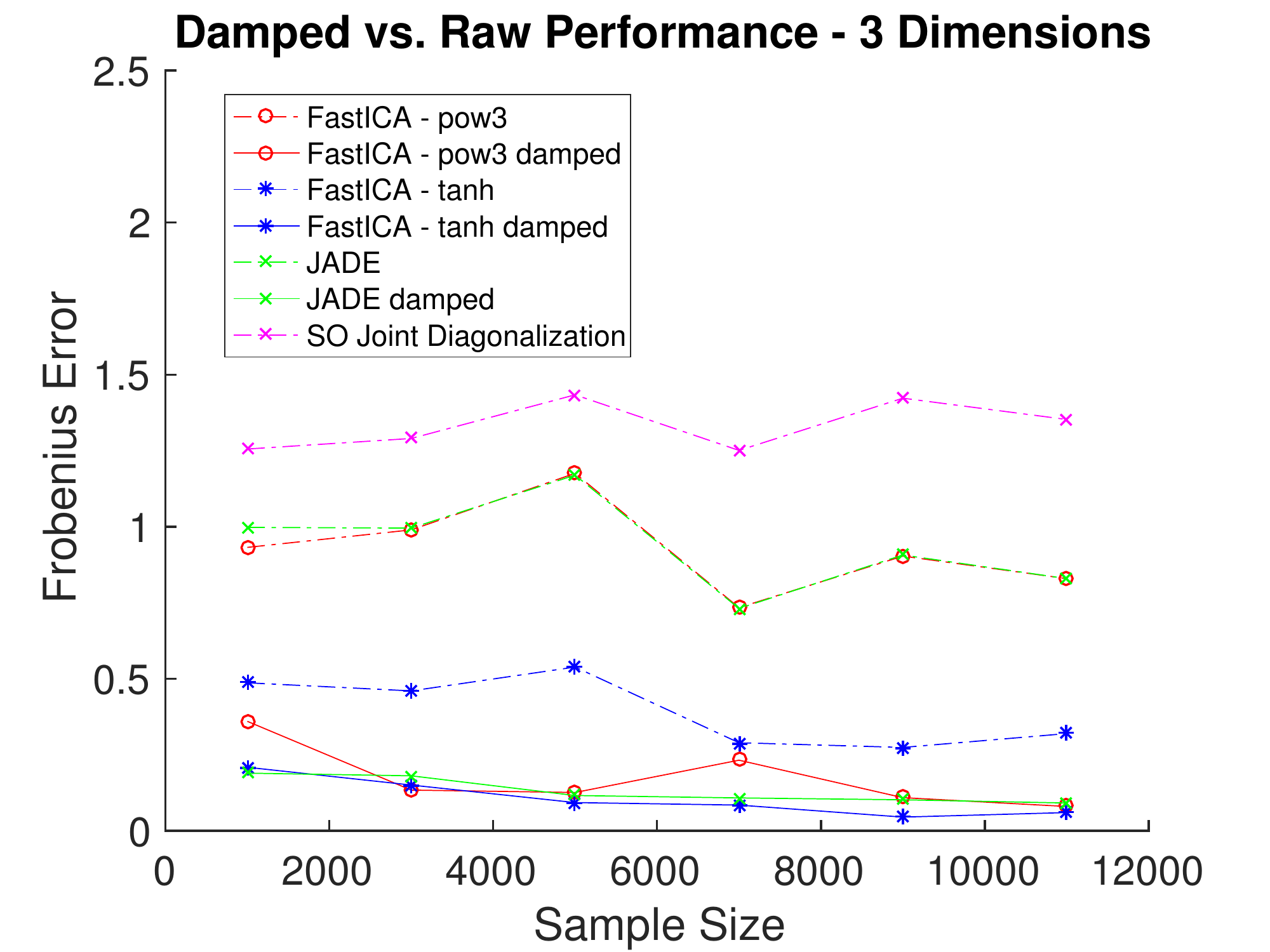}
    \includegraphics[width=0.35\textwidth]{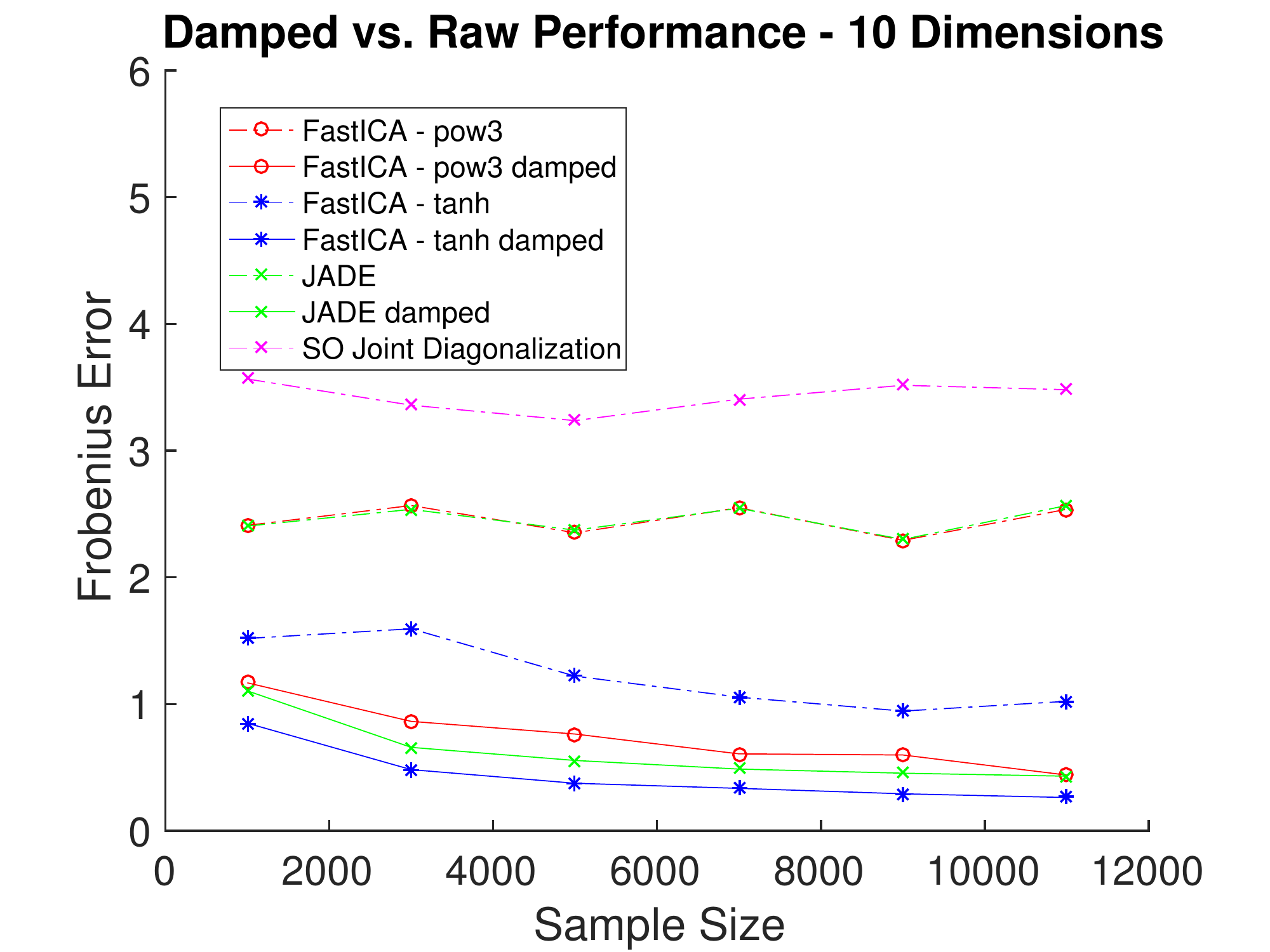}
    \includegraphics[width=0.26\textwidth]{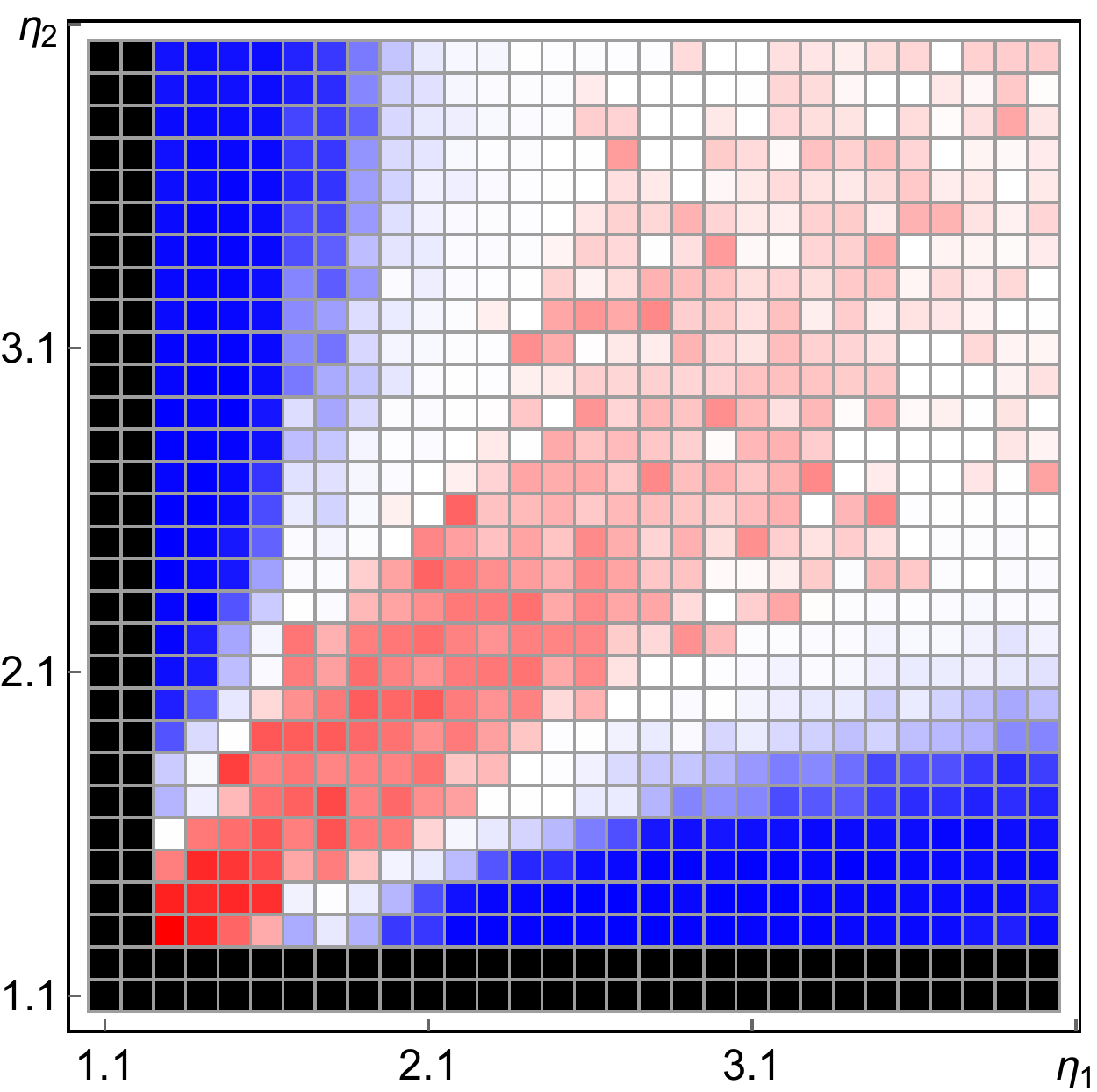}
    \caption{(Left, middle): The error of ICA with and without damping (solid lines and dashed lines, resp.), with unitary mixing matrix. The error is averaged over ten trials, in 3 and 10 dimensions where $\eta = (6,6,2.1)$ and $\eta = (6,\dotsc,6,2.1,2.1)$, resp.
    (Right): The difference between the errors of FastICA with and without damping in 2 dimensions, averaged over 40 trials. 
    For a single cell, the parameters are given by the coordinates, $\eta = (i,j)$. Red indicates that FastICA without damping does better than FastICA with damping, white indicates that the error difference is 0 and the blue indicates that FastICA with damping performs better than without damping. Black indicates that FastICA without damping failed (did not return two independent components).
    }\label{fig:orthogonal-mixedexp}
\end{figure*}

Figure~\ref{fig:orthogonal-mixedexp} shows that, when $A$ is already a perfectly orthogonal matrix, but where $S$ may have heavy-tailed coordinates, several standard ICA algorithms perform better after damping the data. In fact, without damping, some do not appear to converge to a correct solution.
We compare ICA with and without damping in this case: (1) FastICA using the fourth cumulant (``FastICA - pow3''), (2) FastICA using $\log \cosh$ (``FastICA - tanh''), (3) JADE, and (4) Second Order Joint Diagonalization as in, e.g., \cite{cardoso89}
.

\subsection{Experiments on synthetic heavy-tailed data}\label{sec:synthetic-data}\label{sec:synthetic-data-experiments}

We now present the results of HTICA using different orthogonalization techniques: (1)  Orthogonalization via \emph{covariance} (Section \ref{sec:covariance} (2) Orthogonalization via the \emph{centroid} body (Section~\ref{sec:centroidbody}) (3) the ground truth, directly inverting the mixing matrix (\emph{oracle}), and (4) No orthogonalization, and also no damping (for comparison with plain FastICA) (\emph{identity}).

\begin{figure*}[t]
  \centering
    \includegraphics[width=0.32\textwidth]{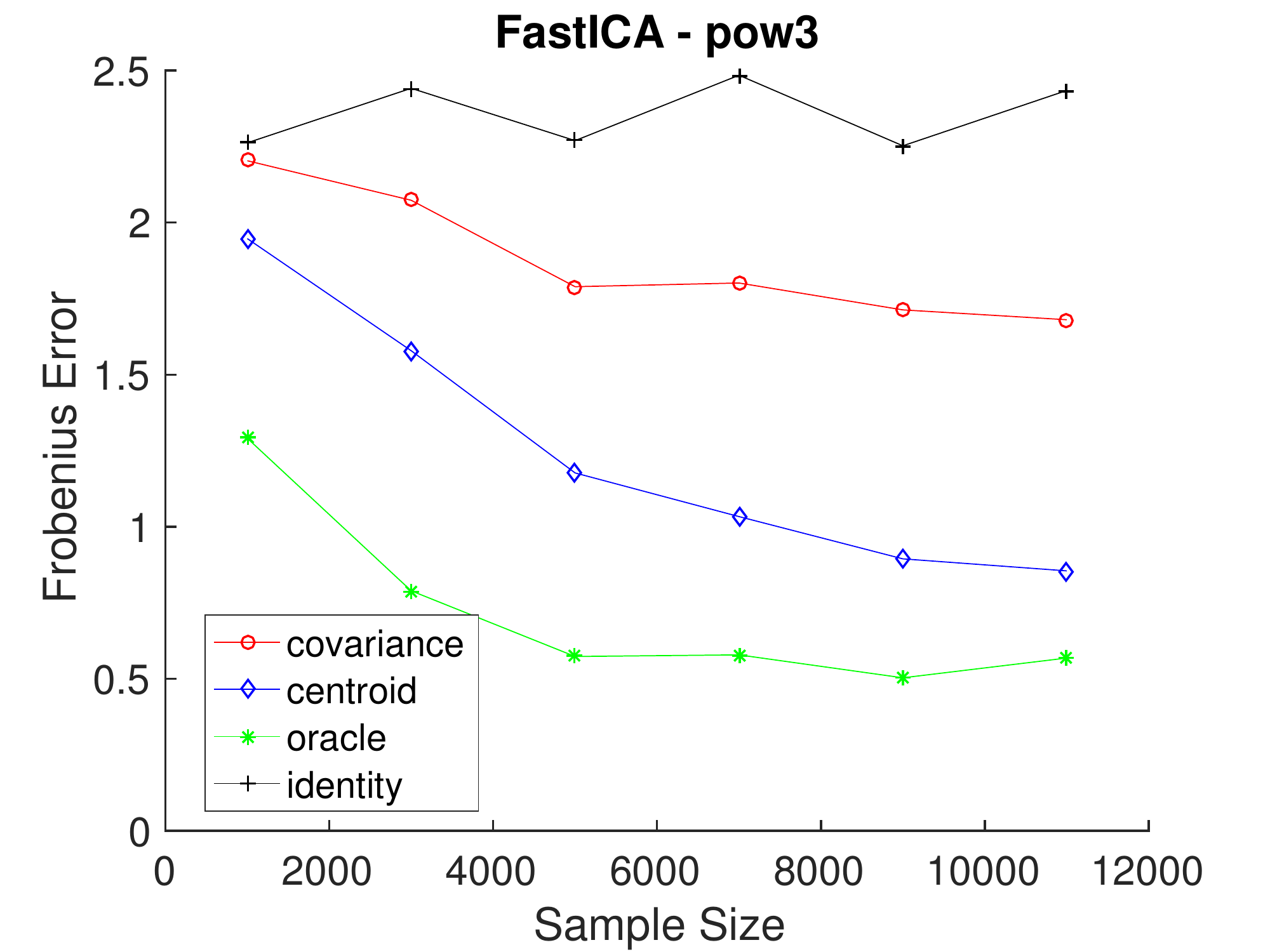}
    \includegraphics[width=0.32\textwidth]{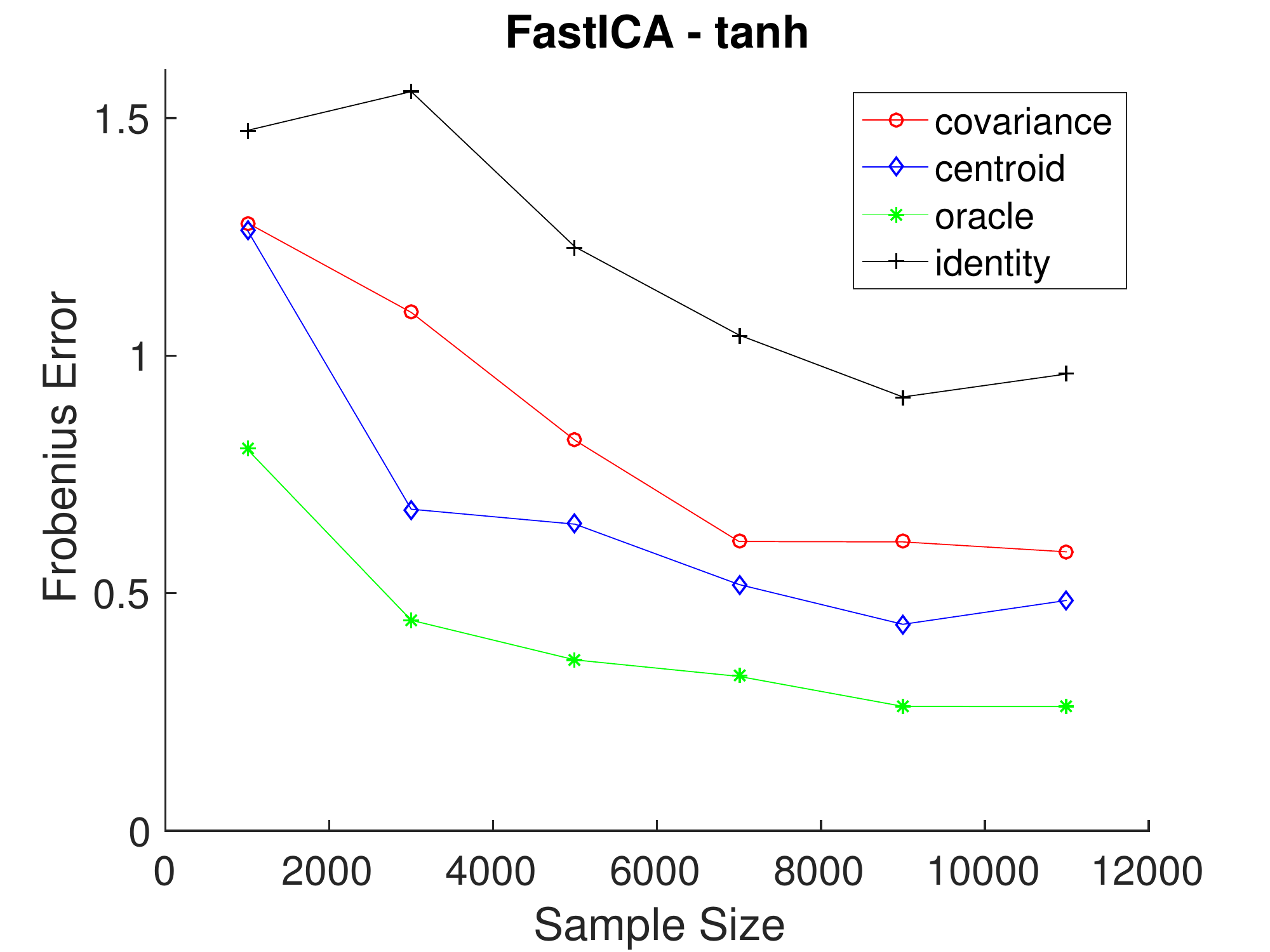}
    \includegraphics[width=0.32\textwidth]{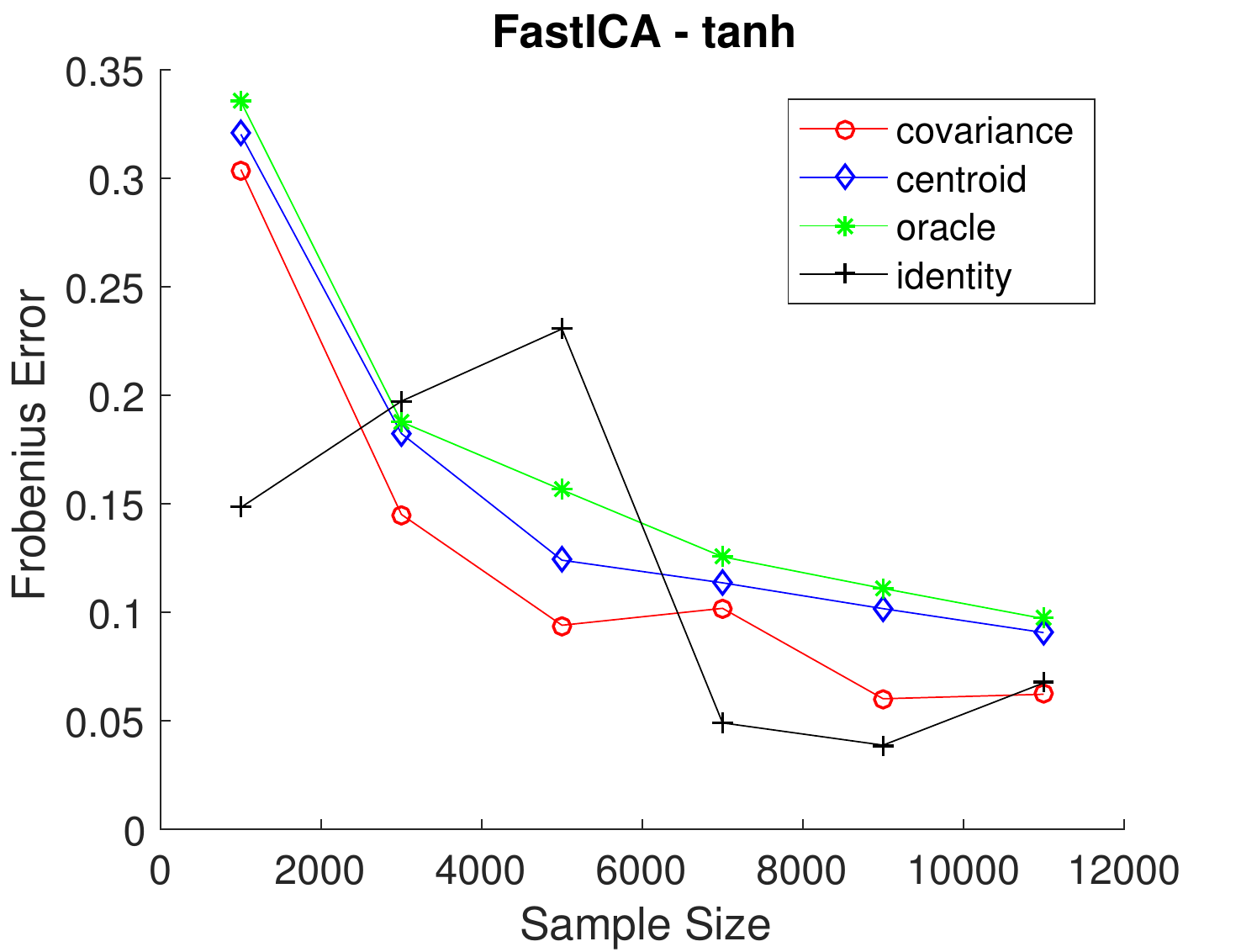}
    \caption{The Frobenius error of the recovered mixing matrix with the `pow3' and `tanh' contrast functions, on 10-dimensional data, averaged over ten trials. The mixing matrix $A$ is random with unit norm columns, not orthogonal.
    In the left and middle figures, the distribution has $\eta = (6,\dotsc,6,2.1,2.1)$ while in the right figure, $\eta = (2.1,\dotsc,2.1)$ (see Section \ref{sec:synthetic-data-experiments} for a discussion).
    }\label{fig:nonorthogonal-mixedexp}
\end{figure*}

The ``mixed'' regime in the left and middle of Figure~\ref{fig:nonorthogonal-mixedexp} (where some signals are \emph{not} heavy-tailed)
demonstrates a very dramatic contrast between different orthogonalization methods, even when only two heavy-tailed signals are present.

In the experiment with different methods of orthogonalization it was observed that when all exponents are the same or very close, orthogonalization via covariance performs better than orthogonalization via centroid and the true mixing matrix as seen in Figure~\ref{fig:nonorthogonal-mixedexp}.
A partial explanation is that, given the results in Figure~\ref{fig:orthogonal-mixedexp}, we know that equal exponents favor FastICA without damping and orthogonalization (\emph{identity} in Figure \ref{fig:nonorthogonal-mixedexp}). 
The line showing the performance with no orthogonalization and no damping (``identity'') behaves somewhat erratically, most likely due the presence of the heavy-tailed samples.
Additionally, damping and the choice of parameter $R$ is sensitive to scaling. A scaled-up distribution will be somewhat hurt because fewer samples will survive damping.

\subsection{ICA on speech data}\label{sec:voice-data}

While the above study on synthetic data provides interesting situations where heavy-tails can cause problems for ICA, we provide some results here which use real-world data, specifically human speech.
To study the performance of HTICA on voice data, we first examine whether the data 
is heavy-tailed.
The motivation to use speech data comes from observations by the signal processing community (e.g. \cite{kidmose2000alpha}) that speech data can be modeled by $\alpha$-stable distributions.
For an $\alpha$-stable distribution, with $\alpha \in (0,2)$, only the moments of order less than $\alpha$ will be finite.
We present here some results on a data set of human speech according to the standard cocktail party model, from \cite{uky_data}.

\iflong
The physical setup of the experiments (the human speakers and microphones) is shown in Figure~\ref{fig:voice-data-layouts}.

\begin{figure}
  \centering
  \includegraphics[width=0.35\textwidth]{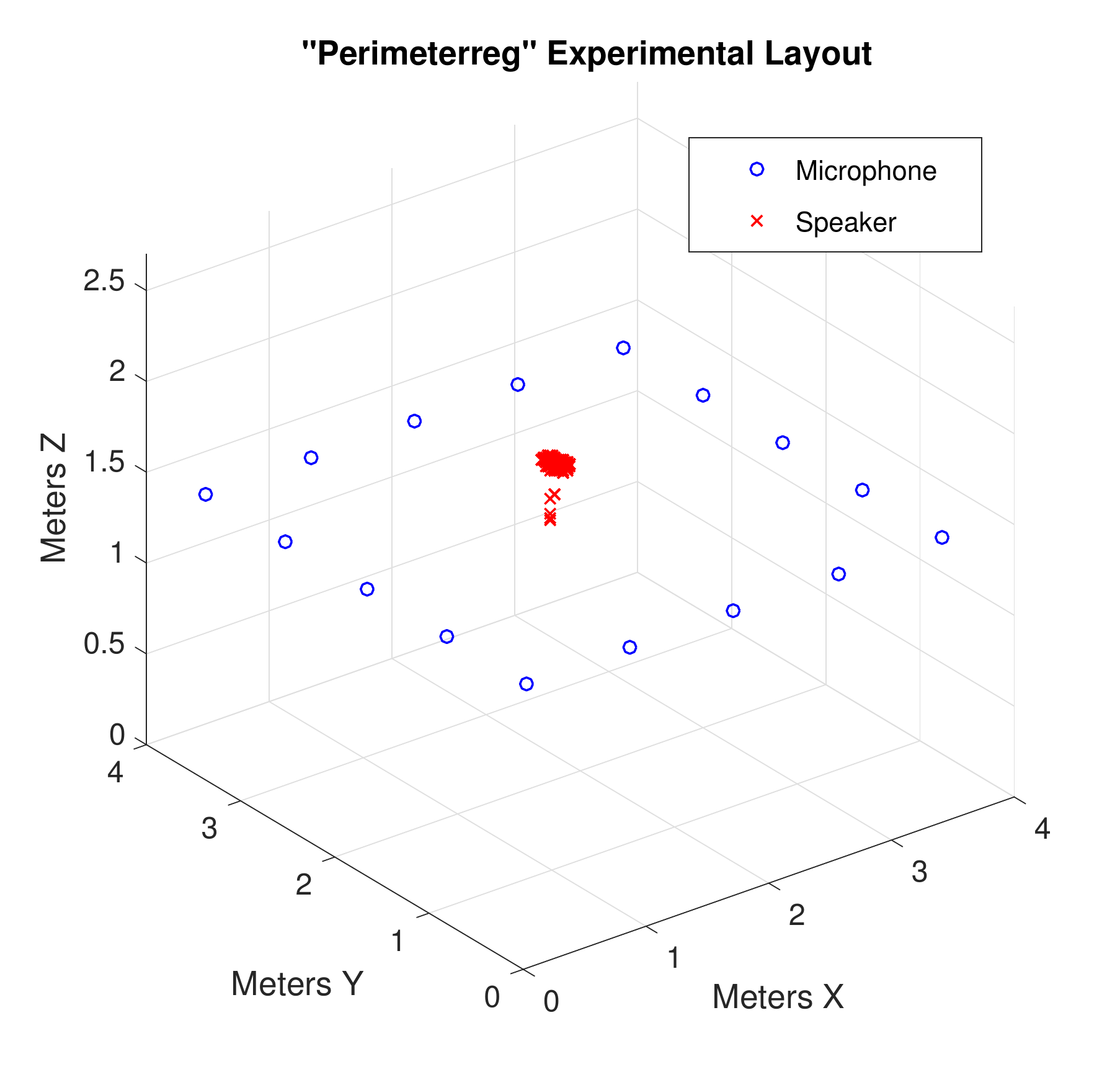}
  \includegraphics[width=0.35\textwidth]{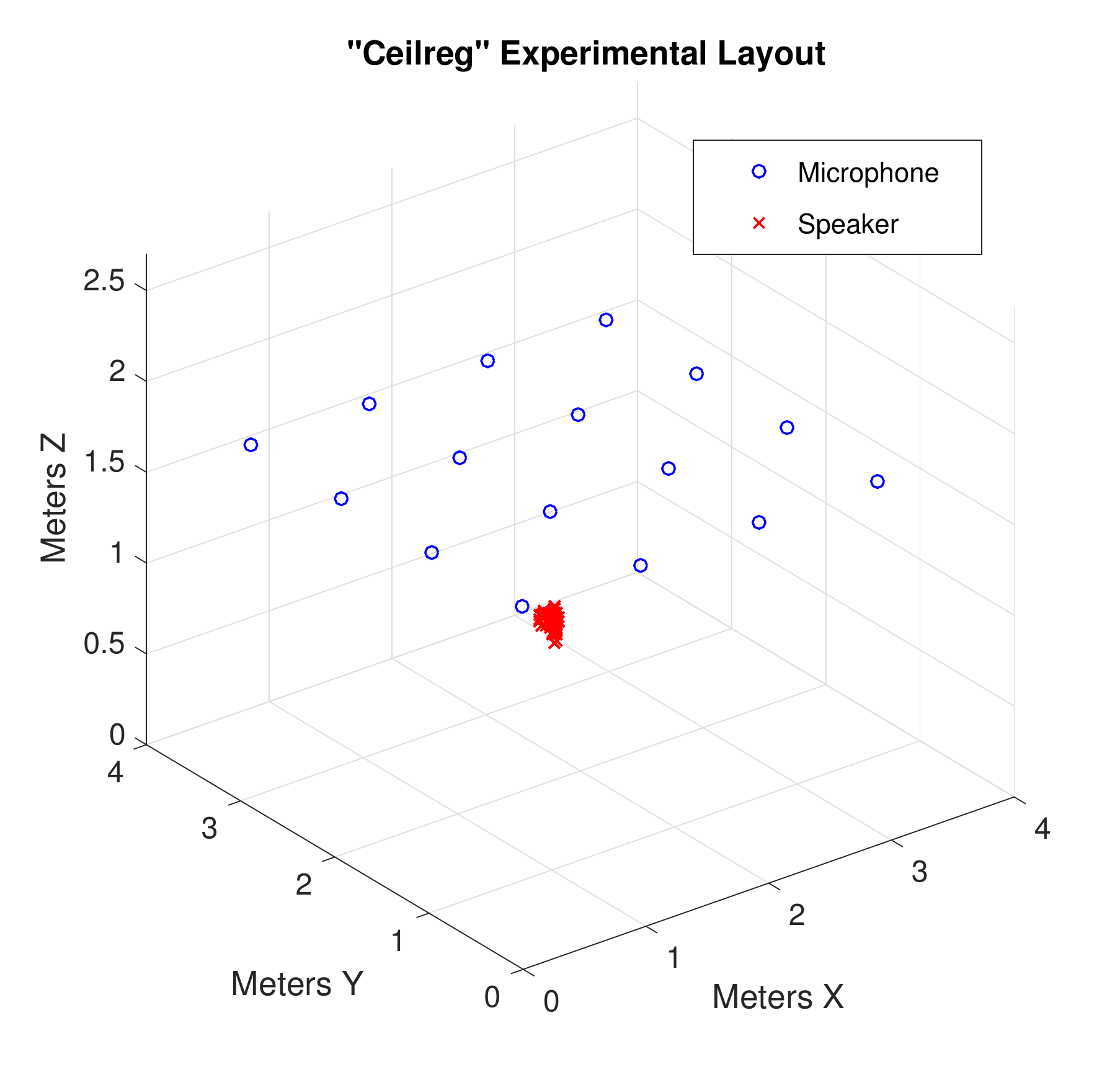}
  \caption{Microphone (blue circles) and human speaker (red ``x'') layouts for the ``ceilreg'' and ``perimeterreg'' voice separation data sets.}
  \label{fig:voice-data-layouts}
\end{figure}
\fi

To estimate whether the data is heavy-tailed, as in \cite{kidmose2000alpha}, we estimate parameter $\alpha$ of a best-fit $\alpha$-stable distribution.
This estimate is in Figure~\ref{fig:stable-parameter-estimation-and-error} for one of the data sets collected.
We can see that the estimated $\alpha$ is  clearly in the heavy-tailed regime for some signals.
\jnote{could use more references here.}
\jnote{include in an appendix the actual estimator formula.}

\begin{figure*}[t]
  \begin{minipage}{0.25\textwidth}
  \centering
  \includegraphics[width=0.99\textwidth]{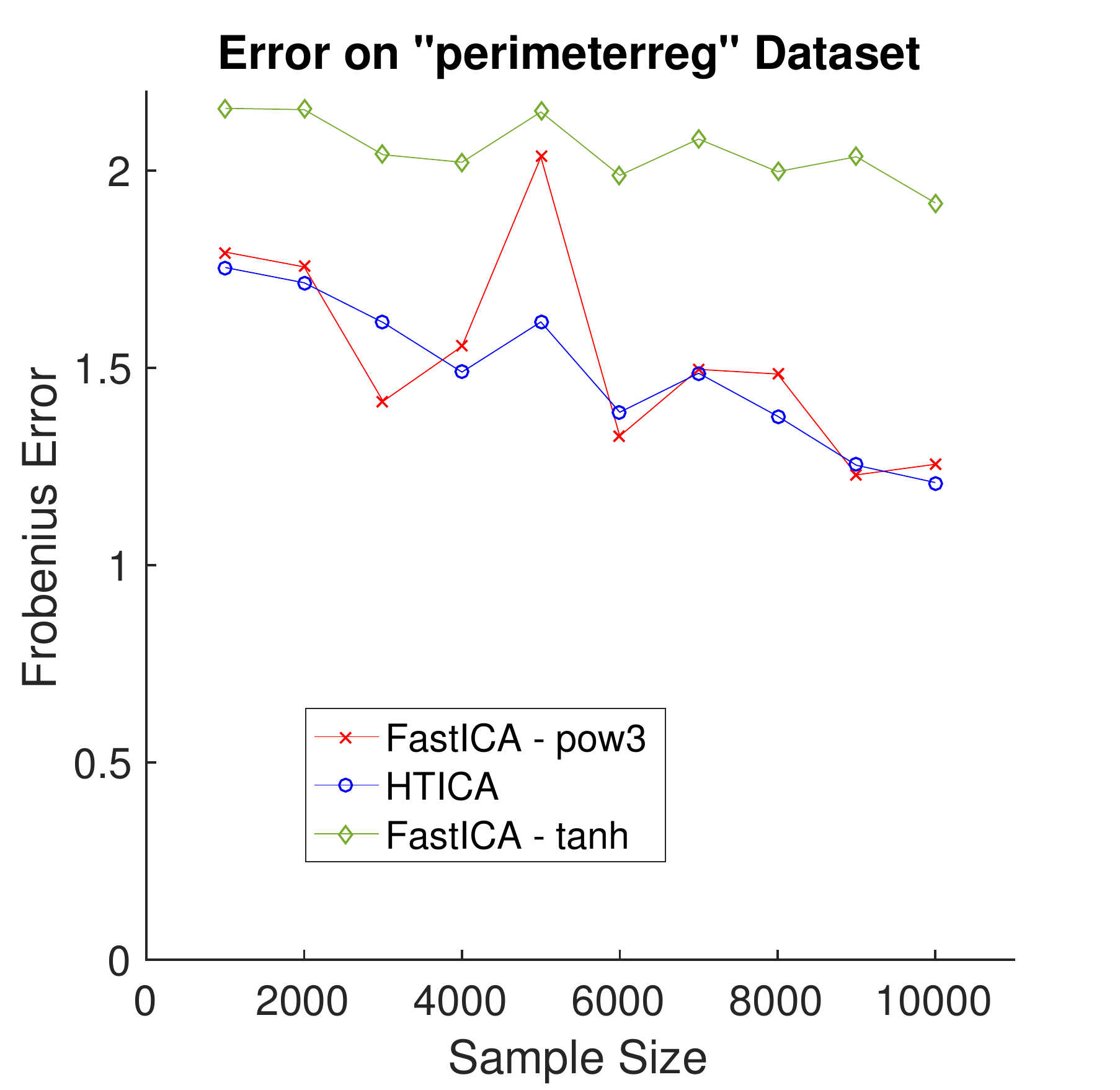}
  \end{minipage}
  \begin{minipage}{0.5\textwidth}
    \hspace{0.5cm}
    \resizebox{0.24\textwidth}{!}{%
      \begin{tabular}{|l|l|}
      \hline
      Signal & $\hat{\alpha}$ \\ \hline
      1 & 1.91 \\ \hline
      2 & 1.93 \\ \hline
      3 & 1.61 \\ \hline
      4 & 1.60 \\ \hline
      5 & 1.92 \\ \hline
      6 & 2.00 \\ \hline
      \end{tabular}
    }
    \hspace{0.7cm}
    \resizebox{0.9\textwidth}{!}{%
 \begin{tabular}{|c|l|l|l|l|l|}
\hline
\multicolumn{2}{|c|}{}               & \multicolumn{2}{l|}{$\sigma_{\min}$} & \multicolumn{2}{l|}{$\sigma_{\max}/\sigma_{\min}$} \\ \hline
\multicolumn{2}{|l|}{Orthogonalizer} & Centroid         & Covariance        & Centroid                & Covariance               \\ \hline
\multirow{6}{*}{Samples} & 1000  & 0.9302           & 0.9263            & 27.95                   & 579.95                   \\ \cline{2-6} 
                             & 3000  & 0.9603           & 0.9567            & 20.44                   & 410.11                   \\ \cline{2-6} 
                             & 5000  & 0.9694           & 0.9673            & 19.25                   & 490.11                   \\ \cline{2-6} 
                             & 7000  & 0.9739           & 0.9715            & 18.90                   & 347.68                   \\ \cline{2-6} 
                             & 9000  & 0.9790           & 0.9708            & 20.12                   & 573.18                   \\ \cline{2-6} 
                             & 11000 & 0.9793           & 0.9771            & 18.27                   & 286.34                   \\ \hline
\end{tabular}
}
\end{minipage}

  \caption{(Left): Error of estimated mixing matrix on the ``perimeterreg'' data, averaged over ten trials. 
  HTICA is more robust than FastICA.
    (Middle):  Stability parameter $\alpha$ estimates of each component in the ``perimeterreg'' data. Values below 2 are in the heavy-tailed regime.
    (Right): Smallest singular value and condition number of the orthogonalization matrix $BA$ computed via the centroid body and the covariance.
  The data was sampled with parameter $\eta = (6,6,6,6,6,6,6,6,2.1,2.1)$.
  }
  \label{fig:stable-parameter-estimation-and-error}

\end{figure*}

Using data from \cite{uky_data}, we perform the same experiment as in Section~\ref{sec:synthetic-data-experiments}: generate a random mixing matrix with unit length columns, mix the data, and try to recover the mixing matrix.
Although the mixing is synthetic, the setting makes the resulting mixed signals same as real.
Specifically, the experiment was conducted in a room with chairs, carpet, plasterboard walls, and windows on one side.
There was natural noise including vents, computers, florescent lights, and traffic noise through the windows.

Figure~\ref{fig:stable-parameter-estimation-and-error} demonstrates that HTICA (orthogonalizing with centroid body scaling, Section~\ref{sec:centroidbody}) applied to speech data yields some noticeable improvement in the recovery of the mixing matrix, primarily in that it is less susceptible to data that causes FastICA to have large error ``spikes.''
Moreover, in many cases, running only FastICA on the mixed data failed to even recover all of the speech signals, while HTICA succeeded.
In these cases, we had to re-start FastICA until it recovered all the signals.


\section{New approach to orthogonalization and a new analysis of empirical covariance}\label{sec:orthogonalization}


As noted above, the technique in \cite{anon_htica}, while being provably efficient and correct, suffers from practical implementation issues.
Here we discuss two alternatives: orthogonalization by \emph{centroid body scaling} and orthogonalization by using the empirical covariance.
The former, orthogonalization via centroid body scaling, uses the samples already present in the algorithm rather than relying on a random walk to draw samples which are approximately uniform in the algorithm's approximation of the centroid body (as is done in \cite{anon_htica}).
This removes the dependence on random walks and the ellipsoid algorithm; instead, we use samples that are distributed according to the original heavy-tailed distribution but \emph{non-linearly scaled} to lie inside the centroid body.
We prove in Lemma \ref{lemma:cvar-orthogonalizer} that the covariance of this subset of samples is enough to orthogonalize the mixing matrix $A$.
Secondly, we prove that one can, in fact, ``forget'' that the data is heavy tailed and orthogonalize by using the empirical covariance of the data, even though it diverges, and that this is enough to orthogonalize the mixing matrix $A$. However, as observed in experimental results, 
in general this has a downside compared to orthogonalization via centroid body in that it could cause numerical instability during
the ``second'' phase of ICA as the data obtained is less well-conditioned. 
This is illustrated directly in the table in Figure~\ref{fig:stable-parameter-estimation-and-error} containing the singular value and 
condition number of the mixing matrix $BA$ in the approximately orthogonal ICA model.

\subsection{Orthogonalization via centroid body scaling}\label{sec:centroidbody}

In \cite{anon_htica}, another orthogonalization procedure, namely \emph{orthogonalization via the uniform distribution in the centroid body} is theoretically proven to work. 
Their procedure does not suffer from the numerical instabilities and composes well with the second phase of ICA algorithms. 
An impractical aspect of that procedure is that it needs samples from the uniform distribution in the centroid body.

We described orthogonalization via centroid body in Section \ref{sec:intro}, except for the estimation of $p(x)$, the Minkowski functional of the centroid body. The complete procedure is stated in Subroutine \ref{sub:centroid}. 

We now explain how to estimate the Minkowski functional. The Minkowski functional was informally described in Section \ref{sec:intro}. 
The Minkowski functional of $\Gamma X$ is formally defined by $p(x) := \inf \{t>0 \suchthat x \in t \Gamma X\}$.
Our estimation of $p(x)$ is based on an explicit linear program (LP) \eqref{eq:lpminkowski} that gives the Minkowski functional of the centroid body of a finite sample of $X$ \emph{exactly} and then arguing that a sample estimate is close to the actual value for $\Gamma X$. For clarity of exposition, we only analyze formally a special case of LP \eqref{eq:lpminkowski} that decides \emph{membership} in the centroid body of a finite sample of $X$ (LP \eqref{eq:lp}) and approximate membership in $\Gamma X$. 
This analysis is in Section \ref{sec:direct_membership}. 
Accuracy guarantees for the approximation of the Minkowski functional follow from this analysis.

\begin{subroutine}[ht]
\caption{Orthogonalization via centroid body scaling}
\label{sub:centroid}
\begin{algorithmic}[1]
\Require
Samples $(X^{(i)})_{i=1}^N$ of ICA model $X = AS$ so each $S_i$ is symmetric with $(1+\gamma)$ moments.
\Ensure 
Matrix $B$ approximate orthogonalizer of $A$ 
\For{$i = 1:N$}, 
\State Let $\lambda^*$ be the optimal value of \eqref{eq:lpminkowski} with $q = X^{(i)}$. Let $d_i = 1/\lambda^*$. Let $Y^{(i)} = \frac{\tanh d_i}{d_i} X^{(i)}$.
\EndFor
\State Let $C = \frac{1}{N} \sum_{i=1}^N Y^{(i)} {Y^{(i)}}^T$.
Output $B = C^{-1/2}$.
\end{algorithmic}
\end{subroutine}
\iflong
\begin{lemma}[\cite{anon_htica}]\label{lem:orthogonalizer}
Let $U$ be a family of $n$-dimensional product distributions. Let $\bar U$ be the closure of $U$ under invertible linear transformations.
Let $Q(\measure)$ be an $n$-dimensional distribution defined as a function of $\measure \in \bar U$. Assume that $U$ and $Q$ satisfy:
\begin{enumerate}
\item\label{item:sym}
For all $\measure \in U$, $Q(\measure)$ is absolutely symmetric.
\item\label{item:equivariant} $Q$ is linear equivariant (that is, for any invertible linear transformation $T$ we have $Q(T\measure) = T Q(\measure)$).
\item\label{item:positive}
For any $\measure \in \bar U$, $\cov(Q(\measure))$ is positive definite.
\end{enumerate}
Then for any symmetric ICA model $X=AS$ with $\measure_S \in U$ we have $\cov(Q(\measure_X))^{-1/2}$ is an orthogonalizer of $X$.
\end{lemma}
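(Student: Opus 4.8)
\emph{Proof plan.} The plan is to follow how $Q$ and the covariance operator transform under the linear action of $A$, and then to cash in the symmetry of $Q(\measure_S)$. First I would write $\measure_X$ for the law of $X=AS$ and $\measure_S$ for that of $S$, so that $\measure_X = A\measure_S$ (the pushforward under $A$, in the notation of assumption~\ref{item:equivariant}). Since $A$ is invertible and $\measure_S \in U \subseteq \bar U$, the law $\measure_X$ also lies in $\bar U$, so $Q(\measure_X)$ is defined. Linear equivariance then gives the key identity $Q(\measure_X) = Q(A\measure_S) = A\,Q(\measure_S)$: a draw from $Q(\measure_X)$ is just $A$ times a draw from $Q(\measure_S)$.

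From this identity I would pass to second moments, obtaining $\cov(Q(\measure_X)) = A\,\cov(Q(\measure_S))\,A^T$; this is legitimate precisely because assumption~\ref{item:positive} guarantees these covariances are finite and positive definite, even though $S$ itself need only have a $(1+\gamma)$-moment. The heart of the argument is to show that $\Lambda := \cov(Q(\measure_S))$ is diagonal, and this is where absolute symmetry (assumption~\ref{item:sym}) enters: if $Z \sim Q(\measure_S)$ is invariant under flipping the sign of any single coordinate, then for $i \neq j$ the reflection $Z_j \mapsto -Z_j$ forces $\Lambda_{ij} = \e(Z_iZ_j) = -\e(Z_iZ_j) = 0$, while assumption~\ref{item:positive} makes the diagonal entries strictly positive.

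The remaining step is algebraic. Setting $M := A\Lambda^{1/2}$ gives $\cov(Q(\measure_X)) = MM^T$, so that $B := \cov(Q(\measure_X))^{-1/2} = (MM^T)^{-1/2}$ satisfies $(BM)(BM)^T = (MM^T)^{-1/2}(MM^T)(MM^T)^{-1/2} = I$; that is, $BM$ is orthogonal. Since $A = M\Lambda^{-1/2}$, we get $BA = (BM)\Lambda^{-1/2}$, a product of an orthogonal matrix and a positive diagonal matrix, whence $(BA)^T(BA) = \Lambda^{-1}$ is diagonal. Thus the columns of $BA$ are mutually orthogonal, i.e.\ $B$ orthogonalizes $X$.

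I expect the only real subtlety to be conceptual bookkeeping rather than hard estimates. One must be careful that absolute symmetry delivers \emph{diagonality} of $\Lambda$ (not the stronger claim that $\Lambda$ is a scalar multiple of the identity), and to notice that diagonality is exactly enough: orthogonalization only requires orthogonal columns of $BA$, tolerating the per-coordinate rescaling by $\Lambda^{-1/2}$, which is absorbed into the usual ICA scaling ambiguity. This is what lets the result hold even when the components $S_i$ have genuinely different distributions, so that $Q(\measure_S)$ is unconditional but need not be permutation-symmetric.
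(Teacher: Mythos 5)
Your proof is correct, but note that this paper never proves Lemma~\ref{lem:orthogonalizer}: it is imported by citation from \cite{anon_htica} and used as a black box, so there is no in-paper proof to compare against. Your argument is the natural one and is consistent with how the lemma is consumed downstream: equivariance gives $\cov(Q(\measure_X)) = A\,\cov(Q(\measure_S))\,A^T$, absolute symmetry of $Q(\measure_S)$ forces $\Lambda := \cov(Q(\measure_S))$ to be diagonal, positive definiteness makes its diagonal entries strictly positive, and the algebra $(BA)^T(BA) = \Lambda^{-1}$ finishes the claim. Indeed, the paper's proof of Lemma~\ref{lemma:cvar-orthogonalizer} verifies exactly these ingredients before invoking Lemma~\ref{lem:orthogonalizer} (there the diagonal matrix is $\e(W_\alpha W_\alpha^T)$ with entries $\e(\alpha^2 W_i^2)$), so your route is the intended one. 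Your closing caveat is also exactly right: symmetry yields diagonality of $\Lambda$, not proportionality to the identity, and diagonality is precisely what orthogonality of the columns of $BA$ requires --- the residual per-coordinate scaling $\Lambda^{-1/2}$ is absorbed by the ICA scaling ambiguity, which is why the lemma delivers an orthogonalizer rather than a whitener.
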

\fi

\begin{lemma}\label{lemma:cvar-orthogonalizer}
Let $X$ be a random vector drawn from an ICA model $X=AS$ such that for all $i$ we have $\e \abs{S_i} = 1$ and $S_i$ is symmetrically distributed. 
Let $Y = \frac{\tanh p(X)}{p(X)} X$ where $p(X)$ is the Minkoswki functional of $\Gamma X$.
\details{$\centroid X$ contains $A B_1^\dim$}
Then $\cov(Y)^{-1/2}$ is an orthogonalizer of $X$.
\end{lemma}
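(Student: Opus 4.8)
The plan is to derive the lemma from the abstract orthogonalization criterion, Lemma~\ref{lem:orthogonalizer}. I would take $U$ to be the family of $\dim$-dimensional product distributions whose marginals are symmetric with $\e\abs{S_i}=1$, let $\bar U$ be its closure under invertible linear maps, and define $Q$ by letting $Q(\measure)$ be the law of $Y=\frac{\tanh p(Z)}{p(Z)}Z$, where $Z\sim\measure$ and $p=p_{\Gamma Z}$ is the Minkowski functional of $\Gamma Z$. Since $Y=\tanh(p(Z))\cdot\frac{Z}{p(Z)}$ with $\frac{Z}{p(Z)}\in\partial(\Gamma Z)$ and $\tanh\in[0,1)$, the vector $Y$ lies inside the (bounded) body $\Gamma Z$, so $\cov(Y)$ is finite. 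Thus once the three hypotheses of Lemma~\ref{lem:orthogonalizer} are checked for this $U$ and $Q$, the conclusion that $\cov(Y)^{-1/2}$ orthogonalizes $X$ follows at once.

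Before checking the hypotheses I would verify that $Q$ is well-defined, i.e.\ that $p$ is finite, which amounts to showing the origin lies in the interior of $\Gamma X$. Here I would use the hinted containment $\Gamma X\supseteq A B_1^{\dim}$. By the centroid-body equivariance established below it suffices to prove $\Gamma S\supseteq B_1^{\dim}$, and comparing support functions this is equivalent to $\e\abs{\inner{u}{S}}\ge\norm{u}_\infty$ for all $u$. Writing $a_i=u_iS_i$, the $a_i$ are independent, symmetric, and mean zero, so conditioning on $a_j$ and applying Jensen to the remaining mean-zero sum gives $\e\abs{\sum_i a_i}\ge\e\abs{a_j}=\abs{u_j}\e\abs{S_j}=\abs{u_j}$; maximizing over $j$ gives the claim. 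As $A$ is invertible, $AB_1^{\dim}$ is full-dimensional, so $p$ is finite everywhere and $Y$ is well-defined.

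It then remains to verify the three conditions. For linear equivariance (condition~\ref{item:equivariant}) I would compute $h_{\Gamma(TZ)}(u)=\e\abs{\inner{u}{TZ}}=\e\abs{\inner{T^Tu}{Z}}=h_{T\Gamma Z}(u)$, so $\Gamma(TZ)=T\Gamma Z$; consequently the Minkowski functionals satisfy $p_{\Gamma(TZ)}(Tz)=p_{\Gamma Z}(z)$, the radial scalar $\frac{\tanh p(\cdot)}{p(\cdot)}$ is unchanged, and $Q(T\measure)=TQ(\measure)$. For absolute symmetry (condition~\ref{item:sym}), when $\measure\in U$ the support function $\e\abs{\inner{u}{S}}$ is invariant under sign flips of the coordinates of $u$ because each $S_i$ is symmetric and independent; hence $\Gamma S$, and therefore $p$, are invariant under coordinate sign changes, so the sign-flip map commutes with $z\mapsto Y(z)$ and, since it also preserves the law of $Z$, the law of $Y$ is absolutely symmetric. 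For positive definiteness (condition~\ref{item:positive}), note $Y=c(Z)Z$ with $c(z)=\frac{\tanh p(z)}{p(z)}>0$ for $z\neq0$, so for any $u\neq0$ we have $\inner{u}{Y}=0$ only when $\inner{u}{Z}=0$; as each $S_i$ is nondegenerate (symmetry plus $\e\abs{S_i}=1$ forces both signs) and any $\measure\in\bar U$ is an invertible image of a product distribution, $Z$ is not supported on a hyperplane, so $\pr(\inner{u}{Z}\neq0)>0$ and $\cov(Y)=\e[YY^T]$ (mean zero by symmetry) is positive definite.

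The main obstacle I anticipate is not any single computation but making the well-definedness step fully rigorous: one must guarantee that $\Gamma X$ has nonempty interior (supplied by the $\Gamma S\supseteq B_1^{\dim}$ bound) and that $Y$ possesses a genuine finite, invertible covariance. The nondegeneracy argument underlying $\cov(Y)\succ0$—reducing it to $X$ not being concentrated on a hyperplane—is precisely where the symmetry and $\e\abs{S_i}=1$ hypotheses get used, and some care is needed to discard the measure-zero event $Z=0$ when asserting $c(Z)>0$ almost surely.
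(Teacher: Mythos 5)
Your proposal is correct and follows essentially the same route as the paper: both invoke Lemma~\ref{lem:orthogonalizer} with the same family $U$ of symmetric, normalized product distributions and the same map $Q$ given by the $\frac{\tanh p(\cdot)}{p(\cdot)}$ scaling, then verify absolute symmetry, linear equivariance (via $\Gamma(TZ)=T\Gamma Z$, which the paper cites as Lemma~\ref{lem:equivariance}), and positive definiteness of $\cov(Q(\measure))$. Your only deviations are extra detail the paper leaves to citations (the $\Gamma S\supseteq B_1^\dim$ well-definedness step, from Lemma~\ref{lem:centroid-scaling}) and a mildly different but equally valid nondegeneracy argument for $\cov(Y)\succ 0$ (no hyperplane support, versus the paper's computation of $A\,\e(W_\alpha W_\alpha^T)A^T$ with nonzero diagonal entries).
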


\begin{proof}
We will be applying Lemma \ref{lem:orthogonalizer}.
Let $U$ denote the set of absolutely symmetric product distributions $\measure_W$ over $\RR^\dim$ such that $\e \abs{W_i} = 1$ for all $i$.
For $\measure_V \in \bar U$, let $Q(\measure_V)$ be equal to the distribution obtained by scaling $V$ as described earlier, that is, distribution of $\alpha V$,  where $\alpha = \frac{\tanh p(V)}{p(V)} $, $p(V)$ is the Minkoswki functional of $\Gamma \measure_V $.

For all $\measure_W \in U$, $W_i$ is symmetric and $\e \abs{W_i} = 1$ which implies that $\alpha W$, that is, $Q(\measure_W)$ is absolutely symmetric.
Let $\measure_V \in \bar U$. Then $Q(\measure_V)$ is equal to the distribution of $\alpha V$. For any invertible linear transformation $T$ and measurable set  $\mathcal{M}$, we have 
$Q(T\measure_V)(\mathcal{M}) = Q(\measure_{TV})(\mathcal{M})
=\measure_{ \alpha TV}(\mathcal{M})
=\measure_{\alpha V}(T^{-1}\mathcal{M})
=TQ(\measure_{V})(\mathcal{M})$.
\details{$\alpha= \frac{\tanh p(TV)}{p(TV)}, p(TV) =$  Minkoswki functional of $\Gamma TV$, that is, $T \Gamma V$ from Lemma \ref{lem:equivariance} (linear equivariance of $\Gamma$). $p(T^{-1}TV) =$ Minkoswki functional of $T^{-1}T \Gamma V$}
Thus $Q$ is linear equivariant. 
Let $\measure \in \bar U$. Then there exist $A$ and $\measure_W \in U$ such that $\measure = A \measure_W$.  
We get $\cov(Q(\measure)) = \cov(AQ(\measure_W))$. Let $W_\alpha = \alpha W$.
Thus, $\cov(AQ(\measure_W)) = A \e (W_\alpha {W}_\alpha^T ) A^T$ where $\e (W_\alpha {W}_\alpha^T)$ is a diagonal matrix with elements $\e (\alpha^2 W_i^2 )$ which are non-zero because we assume $\e \abs{W_i} = 1$. 
This implies that $\cov(Q(\measure))$ is positive definite and thus by Lemma \ref{lem:orthogonalizer}, $\cov(Y)^{-1/2}$ is an orthogonalizer of $X$. 
\end{proof}

\subsection{Orthogonalization via covariance}\label{sec:covariance}

Here we show the somewhat surprising fact that  orthogonalization of heavy-tailed signals is sometimes possible by using the ``standard'' approach: inverting the empirical covariance matrix.
The advantage here, is that it is computationally very simple, specifically that having heavy-tailed data incurs very little computational penalty on the process of orthogonalization alone.
It's standard to use covariance matrix for \emph{whitening} when the second moments of all independent components 
exist \cite{ICA01}:
Given samples from the ICA model $X = AS$, we compute the empirical covariance matrix $\tilde\Sigma$ which 
tends to the true covariance matrix as we take more samples and set 
$B=\tilde\Sigma^{-1/2}$. Then one can show that $BA$ is a rotation matrix, and thus by pre-multiplying the data by
$B$ we obtain an ICA model $Y = BX = (BA)S$, where the mixing matrix $BA$ is a rotation matrix, and this model is then amenable to various algorithms. 
In the heavy-tailed regime where the second moment does not exist for some of the components, 
there is no true covariance matrix and the empirical covariance diverges as we take more samples. 
However, for any fixed number of samples one can still compute the empirical covariance matrix. 
In previous work (e.g., \cite{ChenBickel04}), the empirical covariance matrix was used for whitening in the heavy-tailed regime with good empirical performance; \cite{ChenBickel04} also provided some theoretical analysis to explain this surprising performance. 
However, their work (both experimental and theoretical) was limited to some very special cases 
(e.g., only one of the components is heavy-tailed, or there are only two components both with stable
distributions without finite second moment).

We will show that the above procedure 
(namely pre-multiplying the data by $B:=\tilde\Sigma^{-1/2}$)  
``works" under considerably more general conditions, namely
if $(1+\gamma)$-moment exists for $\gamma > 0$ for each independent component $S_i$. 
By ``works" we mean that instead of whitening the data (that is $BA$ is rotation matrix) it does something slightly weaker
but still just as good for the purpose of applying ICA algorithms in the next phase. It \emph{orthogonalizes} the 
data, that is now $BA$ is close to a matrix whose columns are orthogonal.
In other words, $(BA)^T(BA)$ is close to a diagonal matrix (in a sense made precise in 
Theorem~\ref{thm:alg-1-correctness}).

\iflong
Let $X$ be a real-valued symmetric random variable such that $\E (\abs{X}^{1+\gamma}) \leq M$ for some $M > 1$ and 
$0 < \gamma < 1$. The following lemma from \cite{anon_htica} says that the empirical average of the absolute value of $X$ converges to the expectation of $|X|$. The proof, which we omit, follows an argument similar to the proof of the Chebyshev's inequality. 
Let $\tilde{\E}_N[\abs{X}]$ be the empirical average obtained from $N$ independent samples 
$X^{(1)}, \ldots, X^{(N)}$,
i.e., $(\abs{X^{(1)}}+\dotsb+\abs{X^{(N)}})/N$.

\begin{lemma}\label{lem:1-plus-eps-chebyshev}
Let $\epsilon \in (0,1)$. With the notation above, for 
$N \geq \left(\frac{8M}{\epsilon}\right)^{\frac{1}{2}+\frac{1}{\gamma}}$,
we have
$\Pr[\abs{\tilde{\E}_N[\abs{X}]-\E[\abs{X}]} > \epsilon] \leq \frac{8M}{\epsilon^2 N^{\gamma/3}}$.
\end{lemma}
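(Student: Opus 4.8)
The plan is to pass to the nonnegative variable $Z := \abs{X}$, which satisfies $\E(Z^{1+\gamma})\le M$, and to control the deviation of $\bar Z_N := \tilde{\E}_N[\abs{X}] = \frac{1}{N}\sum_i Z^{(i)}$ from $\mu := \E Z$. The essential obstacle is that $Z$ need not have a finite second moment, so Chebyshev's inequality cannot be applied to $\bar Z_N$ directly. The remedy is truncation: the whole argument is a Chebyshev bound for a truncated copy of $Z$ together with a correction for the (rare) large samples, which is the sense in which it is ``similar to the proof of Chebyshev's inequality.''

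First I would fix a threshold $T > 0$, set $W := Z\,\ind_{Z \le T}$ and $W_i := Z^{(i)}\ind_{Z^{(i)} \le T}$, and write $\bar W_N := \frac{1}{N}\sum_i W_i$. From $\E(Z^{1+\gamma})\le M$ and $0<\gamma<1$ I would extract three estimates, each by factoring out a power of $Z$ and bounding it by the matching power of $T$ on the relevant event: the tail probability $\Pr(Z>T)\le M/T^{1+\gamma}$ (Markov applied to $Z^{1+\gamma}$); the truncation bias $0\le \mu-\E W = \E(Z\,\ind_{Z>T})\le MT^{-\gamma}$; and the truncated second moment $\Var(W)\le \E(Z^2\,\ind_{Z\le T})\le MT^{1-\gamma}$, using $Z^2 = Z^{1+\gamma}Z^{1-\gamma}\le Z^{1+\gamma}T^{1-\gamma}$ when $Z\le T$. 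The first estimate controls the large samples and the third is what makes Chebyshev usable.

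Next I would condition on the event $E$ that no sample exceeds $T$, on which $\bar Z_N = \bar W_N$; a union bound gives $\Pr(E^c)\le NM/T^{1+\gamma}$. On $E$ the triangle inequality reduces $\{\abs{\bar Z_N-\mu}>\epsilon\}$ to $\{\abs{\bar W_N-\E W}>\epsilon-(\mu-\E W)\}$, so once the bias is at most $\epsilon/2$ I may apply Chebyshev to $\bar W_N$ (which now has finite variance $\Var(W)/N$) to obtain $\Pr(\abs{\bar W_N-\E W}>\epsilon/2)\le 4\Var(W)/(N\epsilon^2)\le 4MT^{1-\gamma}/(N\epsilon^2)$. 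Combining, $\Pr(\abs{\bar Z_N-\mu}>\epsilon)\le NM/T^{1+\gamma} + 4MT^{1-\gamma}/(N\epsilon^2)$.

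Finally I would choose the threshold to balance these two terms and to kill the bias, taking $T = N^{2/(\gamma+2)}$. Since $2\gamma/(\gamma+2)=1/(\tfrac12+\tfrac1\gamma)$, the hypothesis $N\ge(8M/\epsilon)^{1/2+1/\gamma}$ gives $T^{\gamma}=N^{2\gamma/(\gamma+2)}\ge 8M/\epsilon$, hence $MT^{-\gamma}\le\epsilon/8\le\epsilon/2$, which is exactly the bias bound needed above. Substituting this $T$, the first term becomes $MN^{-\gamma/(\gamma+2)}$ and the second $4MN^{-3\gamma/(\gamma+2)}/\epsilon^2$; since $\gamma\le 1$ forces both $\gamma/(\gamma+2)$ and $3\gamma/(\gamma+2)$ to be at least $\gamma/3$, and $\epsilon<1$, each term is at most a constant times $M/(\epsilon^2 N^{\gamma/3})$, and their sum is at most $8M/(\epsilon^2 N^{\gamma/3})$. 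The delicate point is this coupled choice of $T$: it must grow with $N$ fast enough for the tail probability to decay yet slowly enough for the truncated variance to stay small, and the exponent $2/(\gamma+2)$ is precisely what equalizes these competing demands while the constant $8$ in the sample bound is what pushes the bias below $\epsilon$.
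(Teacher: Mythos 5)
Your proof is correct, and it is exactly the kind of argument the paper points to: the paper omits the proof entirely, citing \cite{anon_htica} and noting only that it ``follows an argument similar to the proof of the Chebyshev's inequality,'' which is precisely your truncation-plus-Chebyshev scheme (Markov tail bound at threshold $T$, bias and truncated-variance bounds from the $(1+\gamma)$-moment, Chebyshev on the truncated average, then balancing with $T = N^{2/(\gamma+2)}$). Your exponent bookkeeping checks out ($\gamma/(\gamma+2) \geq \gamma/3$ and $3\gamma/(\gamma+2) \geq \gamma/3$ for $\gamma \in (0,1)$, giving a total bound of $5M/(\epsilon^2 N^{\gamma/3}) \leq 8M/(\epsilon^2 N^{\gamma/3})$), so the argument is complete as written.
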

\fi
\begin{theorem}[Orthogonalization via covariance matrix]\label{thm:alg-1-correctness}
Let $X$ be given by ICA model $X=AS$. 
Assume that there exist $t, p, M > 0$  and $\gamma \in (0,1)$
such that for all $i$ we have

(a) $\e (\abs{S_i}^{1+\gamma}) \leq M < \infty$,

(b) (normalization) $\e \abs{S_i} = 1$, and

(c) $\Pr(\abs{S_i} \geq t) \geq p$.
Let $x^{(1)}, \dotsc, x^{(N)}$ be i.i.d. samples according to $X$. Let $\tilde{\Sigma} = (1/N) \sum_{k=1}^N x^{(k)} {x^{(k)}}^T$ and $B = \tilde{\Sigma}^{-1/2}$.
Then for any $\eps, \delta \in (0,1)$, $\norm{(BA)^T B A - D}_{2} \leq \eps$
for a diagonal matrix $D$ with diagonal entries $d_1, \dotsc, d_\dim$ satisfying $0 < d_i, 1/d_i \leq \max\{ 2/pt^2, N^4 \}$ for all $i$
with probability $1-\delta$ when
$N \geq \poly(n, M, 1/p, 1/t, 1/\eps, 1/\delta)$.
\end{theorem}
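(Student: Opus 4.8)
The plan is to reduce everything to the empirical covariance of the \emph{sources} and then exploit that, although its diagonal entries may be huge (the $S_i$ need not have second moments), its off-diagonal entries are small because products of distinct sources are symmetric, mean-zero, and still carry a finite $(1+\gamma)$-moment. First I would write $x^{(k)} = A s^{(k)}$ with $s^{(k)}$ i.i.d.\ copies of $S$, so that $\tilde\Sigma = A \tilde\Sigma_S A^T$ where $\tilde\Sigma_S = (1/N)\sum_k s^{(k)} {s^{(k)}}^T$ is the empirical covariance of the source. Since $B = \tilde\Sigma^{-1/2}$ is symmetric, $B^2 = \tilde\Sigma^{-1}$, and a direct computation gives $(BA)^T BA = A^T \tilde\Sigma^{-1} A = A^T (A\tilde\Sigma_S A^T)^{-1} A = \tilde\Sigma_S^{-1}$ once $\tilde\Sigma_S$ is known to be invertible. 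Thus it suffices to show $\tilde\Sigma_S^{-1}$ is close to a diagonal matrix with the stated diagonal bounds; the mixing matrix $A$ drops out entirely.

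Next I would control the diagonal entries $D_i := (1/N)\sum_k (s_i^{(k)})^2$ of $\tilde\Sigma_S$. For the lower bound, condition (c) together with a Chernoff bound on the Bernoulli counts $\ind[\abs{s_i^{(k)}}\ge t]$ shows that at least a $p/2$ fraction of samples satisfy $(s_i^{(k)})^2 \ge t^2$, so $D_i \ge pt^2/2$ with high probability. For the upper bound, Markov's inequality applied to $\abs{S_i}^{1+\gamma}$ via assumption (a), combined with a union bound over the $N$ samples and $n$ coordinates, gives $\max_{i,k}\abs{s_i^{(k)}} \le (NMn/\delta)^{1/(1+\gamma)}$, hence $D_i \le (NMn/\delta)^{2/(1+\gamma)} \le N^4$ once $N$ is polynomially large (using $2/(1+\gamma) < 2$). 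Taking $D := \diag(1/D_1,\dots,1/D_n)$ then gives $d_i = 1/D_i$ with $d_i \le 2/(pt^2)$ and $1/d_i = D_i \le N^4$, matching the claimed range.

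The crux is bounding the off-diagonal entries $E_{ij} := (1/N)\sum_k s_i^{(k)} s_j^{(k)}$ for $i \ne j$. The product $Z = S_i S_j$ is symmetric, so $\e Z = 0$, and by independence with (a) and (b) it satisfies $\e\abs{Z}^{1+\gamma} = \e\abs{S_i}^{1+\gamma}\,\e\abs{S_j}^{1+\gamma} \le M^2$; but $Z$ can itself be heavy-tailed, so a plain variance/Chebyshev bound is unavailable. I would run the truncation argument underlying Lemma~\ref{lem:1-plus-eps-chebyshev}: split each $Z^{(k)}$ at a threshold $\tau = \poly(N)$ into a bounded part (symmetric, mean $0$, second moment $\le \tau^{1-\gamma}M^2$, controlled by Chebyshev) and a tail part (expected absolute size $\le M^2/\tau^\gamma$, controlled by Markov), and then optimize $\tau$ so that $\abs{E_{ij}} \le \eps'$ with high probability for any desired inverse-polynomial $\eps'$. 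This is the main obstacle, since the convergence rate must be fast enough to keep $N$ polynomial while surviving a union bound over the $O(n^2)$ off-diagonal entries.

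Finally I would assemble the pieces. Write $\tilde\Sigma_S = \tilde D + E$ with $\tilde D = \diag(D_i) \succeq (pt^2/2)\, I$ and $\norm{E}_2 \le \norm{E}_F \le n\eps'$; positivity of $\tilde D$ already yields invertibility of $\tilde\Sigma_S$. Since $\norm{\tilde D^{-1}}_2 \le 2/(pt^2)$, choosing $\eps'$ small enough that $\norm{\tilde D^{-1}}_2\norm{E}_2 \le 1/2$, the Neumann-series perturbation bound gives $\norm{\tilde\Sigma_S^{-1} - \tilde D^{-1}}_2 \le 2\,\norm{\tilde D^{-1}}_2^2\,\norm{E}_2 \le 8 n\eps'/(pt^2)^2$. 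Taking $\eps' \le \eps (pt^2)^2/(8n)$ makes the right-hand side at most $\eps$, and with $D = \tilde D^{-1}$ this proves $\norm{(BA)^T BA - D}_2 \le \eps$. A union bound over the diagonal and off-diagonal failure events, each with individual failure probability $\delta/\poly(n)$, then yields the claimed sample complexity $N \ge \poly(n, M, 1/p, 1/t, 1/\eps, 1/\delta)$.
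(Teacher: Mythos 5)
Your proof is correct and follows essentially the same route as the paper's: reduce $(BA)^T BA$ to the inverse empirical source covariance, bound the off-diagonal entries using concentration of empirical means under a finite $(1+\gamma)$-moment (the paper simply invokes Lemma~\ref{lem:1-plus-eps-chebyshev}, whose proof is exactly the truncation argument you sketch), lower-bound the diagonal via a Chernoff bound and condition (c), upper-bound $1/d_i$ via Markov, and finish with the same inverse-perturbation (Neumann-series) bound and union bound. The only cosmetic difference is that you get the $N^4$ bound by applying Markov to $\abs{S_i}^{1+\gamma}$ under assumption (a), whereas the paper applies Markov to $\abs{S_i}$ under the normalization (b); both give the stated conclusion.
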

\iflong
\begin{proofidea}
For $i\neq j$ we have $\e(S_i S_j) = 0$ (due to our symmetry assumption on $S$) 
and $\e(\abs{S_i S_j}) = \e(\abs{S_i})\e(\abs{S_j}) < \infty$.
We have $(BA)^T BA = L^{-1}$, where $L = (1/N) \sum_{k=1}^N s^{(k)} {s^{(k)}}^T$. The off-diagonal entries of $L$ 
converge to $0$: We have $L_{i,j} = \E S_i S_j = (\E S_i)(\E S_j)$. Now by our assumption that $(1+\gamma)$-moments 
exist, Lemma~\ref{lem:1-plus-eps-chebyshev} is applicable and implies that empirical average $\tilde\E S_i$ tends
to the true average $\E S_i$ as we increase the number of samples. The true average is $0$ because of our assumption
of symmetry (alternatively, we could just assume that the $X_i$ and hence $S_i$ have been centered). 
The diagonal entries of $L$ are bounded away from $0$: This is clear when the second moment is finite,
and follows easily by hypothesis (c) when it is not. \nnote{is the last statement correct}
Finally, one shows that if in $L$ the diagonal entries highly dominate the off-diagonal entries, then the same
is true of $L^{-1}$.
\nnote{is that how the proof goes}
\end{proofidea}
\fi

\iflong

\begin{proof}
\lnote{fix indices}
We have $(BA)^T BA = L^{-1}$, where $L = (1/N) \sum_{k=1}^N s^{(k)} {s^{(k)}}^T$.
By assumption, $\e L_{ij} = 0$ for $i \neq j$.
Note that $\e \abs{s_i s_j}^{1+\gamma} \leq M^2$ and so by Lemma~\ref{lem:1-plus-eps-chebyshev}, for $i \neq j$,
\[
P(\abs{L_{ij}} > \eps_1) \leq \frac{8 M^2}{\eps_1^2 N^{\gamma/3}}
\]
when $N \geq (\frac{8 M^2}{\eps_1})^{\frac{1}{2} + \frac{1}{\gamma}}$.

Now let $D := \diag(L^{-1}_{11}, L^{-1}_{22}, \dots, L^{-1}_{\dim \dim})$.
Then when $\abs{L_{ij}} < \eps_1$ for all $i \neq j$, we have $\norm{L - D^{-1}}_{2} \leq \norm{L - D^{-1}}_{F} \leq n \eps_1$.
The union bound then implies
\begin{equation}\label{eq:off-diagonal-union-bound}
  \begin{aligned}
 P( \norm{L - D^{-1}}_{2} < n\eps_1 ) & \geq P( \norm{L - D^{-1}}_{F} < n\eps_1 ) \\
 & \geq P( \forall i\neq j, \abs{L_{ij}} \leq \eps_1) \\
 & \geq 1 - \frac{8 \dim^2 M^2}{\eps_1^2 N^{\gamma/3}}
  \end{aligned}
\end{equation}
when $N \geq (\frac{8 M^2}{\eps_1})^{\frac{1}{2} + \frac{1}{\gamma}}$.

Next, we aim to bound $\norm{D}_2$ which can be done by writing
\begin{equation}
\norm{D}_2 = \frac{1}{\sigma_{\min}(D^{-1})} = \frac{1}{\min_{i \in [n]} L_{ii}}
\end{equation}
where $L_{ii} = (1/N) \sum_{k=1}^{N} {s_i^{(k)}}^2$.
Consider the random variable $\ind(s_i^2 \geq t^2)$.
We can calculate $\e \sum_j \ind({s_i^{(j)}}^2 \geq t^2) \geq Np$ and use a Chernoff bound to see
\begin{equation}
P\left(\sum_{k \in [N]} \ind({s_i^{(k)}}^2 \geq t^2) \leq \frac{Np}{2} \right) \leq \exp\bigg(-\frac{Np}{8}\bigg)
\end{equation}
and when $\sum_{k \in [N]} \ind({s_i^{(k)}}^2 \geq t^2) \geq \frac{Np}{2}$, we have $L_{ii} \geq t^2 p/2$.
Then with probability at least $1-n\exp(-Np/8)$, all entries of $D^{-1}$ are at least $t^2p/2$.
Using this, if $N \geq N_1 := (8/p)\ln(3n/\delta)$ then $\norm{D}_{2} \leq 2/pt^2$ with probability at least $1 - \delta/3$.

Similarly, suppose that $\norm{D}_{2} \leq 2/pt^2$ and choose $\eps_1 = \min\{ \frac{t^4 p^2}{4n} \cdot \frac{\eps}{2}, \frac{1}{pt^2} \}$ and
\[
N_2 := \max\bigg\{ \bigg( \frac{24n^2 M^2}{\eps_1^2 \delta} \bigg)^{3/\gamma}, \bigg(\frac{8 M^2}{\eps_1}\bigg)^{\frac{1}{2} + \frac{1}{\gamma}} \bigg\}
\]
so that when $N \geq N_2$, we have
$\norm{L - D^{-1}}_{2} \leq 1/(2 \norm{D}_{2})$ and $\norm{L - D^{-1}}_{2} \leq t^4 p^2 \eps / 8$ with probability at least $1-\delta/3$.
Invoking (\ref{eq:inverse-stability}), when $N \geq \max\{N_1, N_2\}$, we have
\begin{equation}\label{eq:inverse-bound}
\norm{L^{-1} - D}_{2} \leq 2 \norm{D}_{2} \norm{L-D^{-1}}_{2} \leq 2 \frac{4}{p^2 t^4} \frac{t^4 p^2 \eps}{8} = \eps
\end{equation}
with probability at least $1 - 2 \delta/3$.

Finally, we upper bound $1/d_i$ for a fixed $i$ by using Markov's inequality:
\begin{equation}
\begin{split}
P\left(\frac{1}{d_i} > N^5 \right) &= P(L_{ii} > N^4) = P\bigg( \sum_{j}^{N} {s_i^{(j)}}^2 > N^5 \bigg) \\
&\leq N P(S_i^2 > N^4 ) \leq N P(\abs{S_i} > N^2) \\ 
&\leq N \frac{\e \abs{S_i}}{N^2} = \frac{1}{N}
\end{split}
\end{equation}
so that $1/d_i \leq N^4$ for all $i$ with probability at least $1-\delta/3$ when $N \geq N_3 := n/3\delta$.
Therefore, when $N \geq \max\{N_1, N_2, N_3 \}$, we have $\norm{L^{-1} - D}_{2} \leq \eps$, $d_i \leq 2/pt^2$, and $1/d_i \leq N^4$ for all $i$ with overall probability at least $1-\delta$.
\end{proof}

We used the following technical result. 
\begin{lemma}\label{lem:inversion}
Let $\norm{\cdot}$ be a matrix norm such that $\norm{AB} \leq \norm{A} \norm{B}$. Let matrices $C, E \in \R^{n\times n}$ be such that $\norm{C^{-1} E}_2 \leq 1$, and
let $\tilde{C} = C + E$. Then
\begin{equation}
\frac{\norm{\tilde{C}^{-1} - C^{-1}}} {\norm{C^{-1}}} \leq \frac{\norm{C^{-1} E}}{1 - \norm{C^{-1} E}}.
\end{equation}
\end{lemma}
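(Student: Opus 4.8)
The plan is to reduce everything to the Neumann series expansion of $(I + C^{-1}E)^{-1}$. First I would factor the perturbed matrix as $\tilde C = C + E = C(I + C^{-1}E)$ and abbreviate $F := C^{-1}E$. The entire argument then hinges on controlling $(I+F)^{-1}$ in terms of $\norm{F}$, after which the estimate on $\tilde C^{-1} - C^{-1}$ follows by an algebraic rearrangement and submultiplicativity.

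Second, I would establish that $I + F$ is invertible with $\norm{(I+F)^{-1}} \le 1/(1 - \norm{F})$ whenever $\norm{F} < 1$. This is the standard Neumann series estimate: the partial sums $\sum_{k=0}^{m} (-F)^k$ are Cauchy because $\norm{(-F)^k} \le \norm{F}^k$ by submultiplicativity and $\sum_{k \ge 0} \norm{F}^k$ converges, so the series converges to a matrix $G$ satisfying $(I+F)G = G(I+F) = I$, i.e. $G = (I+F)^{-1}$, and $\norm{G} \le \sum_{k \ge 0} \norm{F}^k = 1/(1-\norm{F})$.

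Third comes the key algebraic identity. Since $\tilde C^{-1} = (I+F)^{-1}C^{-1}$, we have
\[
\tilde C^{-1} - C^{-1} = \big[(I+F)^{-1} - I\big] C^{-1} = -(I+F)^{-1} F\, C^{-1},
\]
using $(I+F)^{-1} - I = (I+F)^{-1}\big(I - (I+F)\big) = -(I+F)^{-1}F$. Taking norms and applying submultiplicativity together with the bound from the previous step gives
\[
\norm{\tilde C^{-1} - C^{-1}} \le \norm{(I+F)^{-1}}\,\norm{F}\,\norm{C^{-1}} \le \frac{\norm{F}}{1-\norm{F}}\,\norm{C^{-1}},
\]
and dividing by $\norm{C^{-1}}$ yields the claimed inequality, recalling $\norm{F} = \norm{C^{-1}E}$.

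The one point needing care is the hypothesis: the Neumann series converges and the denominator $1 - \norm{C^{-1}E}$ stays positive only when $\norm{C^{-1}E} < 1$ strictly, so I would read the stated condition as the strict inequality. If one wants to honor the literal mismatch between a hypothesis phrased in the spectral norm $\norm{\cdot}_2$ and a conclusion phrased in a possibly different submultiplicative norm $\norm{\cdot}$, the clean fix is to run the whole argument in the single norm $\norm{\cdot}$ that appears in the conclusion, which is all the submultiplicativity steps actually use. Beyond this bookkeeping no deeper obstacle arises: the result is a routine consequence of the Neumann series once the factorization $\tilde C = C(I+F)$ is in place.
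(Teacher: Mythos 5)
The paper never actually proves this lemma: it is invoked as a known perturbation bound for matrix inverses (a standard fact from numerical linear algebra), so there is no in-paper argument to compare yours against --- your proof fills a gap the paper leaves to the reader. Your Neumann-series route is the standard one, and you were right to flag the two defects in the statement: the hypothesis must be the strict inequality $\norm{C^{-1}E} < 1$ (otherwise the series need not converge and the right-hand side is meaningless), and it must be read in the same norm $\norm{\cdot}$ as the conclusion. The paper only ever applies the lemma with $\norm{\cdot}_2$, where these repairs are harmless.

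There is one subtlety you missed. Your step 2 bounds $\norm{(I+F)^{-1}} \le \sum_{k\ge 0}\norm{F}^k = 1/(1-\norm{F})$, which silently uses $\norm{F^0} = \norm{I} = 1$. A general submultiplicative matrix norm only guarantees $\norm{I} \ge 1$ (the Frobenius norm, for instance, has $\norm{I} = \sqrt{n}$), so as written your argument proves the lemma only for norms with $\norm{I}=1$, i.e.\ operator norms --- enough for the paper's application, but not for the lemma as stated. The patch costs one line: you never need $\norm{(I+F)^{-1}}$ itself, only $\norm{(I+F)^{-1}F}$, and
\[
(I+F)^{-1}F \;=\; \sum_{k \ge 1} (-1)^{k-1} F^k
\qquad\Longrightarrow\qquad
\norm{(I+F)^{-1}F} \;\le\; \sum_{k\ge 1}\norm{F}^k \;=\; \frac{\norm{F}}{1-\norm{F}},
\]
where the series starts at $k=1$ and the identity term never appears. (Alternatively, one can avoid the series entirely: from $\tilde C^{-1} - C^{-1} = -C^{-1}E\tilde C^{-1}$, writing $D = \tilde C^{-1} - C^{-1}$ gives $D = -C^{-1}EC^{-1} - C^{-1}ED$, hence $\norm{D} \le \norm{C^{-1}E}\norm{C^{-1}} + \norm{C^{-1}E}\norm{D}$, and rearranging yields the claim; invertibility of $\tilde C$ still follows from your series, or from $\rho(C^{-1}E) \le \norm{C^{-1}E} < 1$.) With either patch, your argument establishes the lemma in the full generality stated.
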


This implies that if $\norm{E}_2 = \norm{\tilde{C} - C}_2 \leq 1/(2 \norm{C^{-1}}_2)$, then
\begin{equation}\label{eq:inverse-stability}
\norm{\tilde{C}^{-1} - C^{-1}}_2 \leq 2 \norm{C^{-1}}_{2}^{2} \norm{E}_{2}.
\end{equation}
\fi

In Theorem~\ref{thm:alg-1-correctness}, the diagonal entries are lower bounded, which avoids some degeneracy, but they could still grow quite large because of the heavy tails.
This is a real drawback of orthogonalization via covariance. HTICA, using the more sophisticated \emph{orthogonalization via centroid body scaling} does not have this problem. We can see this in the right table of Figure \ref{fig:stable-parameter-estimation-and-error}, where the condition number of ``centroid'' is much smaller than the condition number of ``covariance.''

\ifcolt
\section{Membership oracle for the centroid body}\label{sec:direct_membership}
We will now describe and theoretically justify a new and practically efficient $\eps$-weak membership oracle for $\Gamma X$, which is a black-box that can answer approximate membership queries in $\Gamma X$.
\iflong
More precisely:
\begin{definition}
The \emph{$\epsilon$-weak membership problem} for $K\subseteq \RR^n$ is the following:
Given a point $y \in \QQ^n$ and a rational number $\eps > 0$, 
either (i) assert that $y \in K_\eps$, or (ii) assert that $y \not \in K_{-\eps}$.
An \emph{$\epsilon$-weak membership oracle} for $K$ is an oracle that solves the weak membership problem for $K$.
For $\delta \in [0,1]$, an \emph{$(\eps, \delta)$-weak membership oracle} for $K$ acts as 
follows: Given a point $y \in \QQ^n$, with probability at least $1-\delta$ it solves the  
$\eps$-weak membership problem for $y, K$, and otherwise its output can be arbitrary.  
\end{definition}
\else
The formal definition and argument are in the supplementary material.
\fi
\iflong
We start with an informal description of the algorithm and its correctness.
\else
We give an informal description of the algorithm and its correctness.
\fi
\else
\section{Membership oracle for the centroid body, without polarity}\label{sec:direct_membership}
We will see now how to implement an $\eps$-weak membership oracle for $\Gamma X$ directly, without using polarity. We start with an informal description of the algorithm and its correctness.
\fi

The algorithm implementing the oracle (Subroutine \ref{sub:centroid-oracle}) is the following: 
Let $q \in \RR^\dim$ be a query point. 
Let $X_1, \dotsc, X_N$ be a sample of random vector $X$.
Given the sample, let $Y$ be uniformly distributed in $\{X_1, \dotsc, X_N\}$. 
Output YES if $q \in \Gamma Y$, else output NO.

Idea of the correctness of the algorithm: If $q$ is not in $(\Gamma X)_\eps$, then there is a hyperplane separating $q$ from $(\Gamma X)_\eps$. 
Let $\{x \suchthat a^T x = b \}$ be the hyperplane, satisfying $\norm{a} = 1$, $a^T q > b$ and $a^T x \leq b$ for every $x \in (\Gamma X)_\eps$.
Thus, we have $h_{(\Gamma X)_\eps}(a) \leq b$ and $h_{\Gamma X}(a) \leq b - \eps$.
We have \[ h_{\Gamma Y}(a) = \e(\abs{a^T Y}) = (1/N) \sum_{i=1}^N \abs{a^T X_i}. \]
\iflong
By Lemma \ref{lem:1-plus-eps-chebyshev}, 
\else
By \cite[Lemma 14]{anon_htica},
\fi
$(1/N) \sum_{i=1}^N \abs{a^T X_i}$ is within $\eps$ of $\e \abs{a^T X} = h_{\Gamma X}(a) \leq b - \eps$ when $N$ is large enough with probability at least $1-\delta$ over the sample $X_1, \dotsc, X_N$. 
In particular, $h_{\Gamma Y}(a) \leq b$, which implies $q \notin \Gamma Y$ and the algorithm outputs NO, with probability at least $1-\delta$. 

If $q$ is in $(\Gamma X)_{-\eps}$, let $y = q + \eps \hat q \in \Gamma X$.  
\iflong
We will prove the following claim: 
\fi

\iflong
Informal claim (Lemma \ref{lem:centroid_approximation}): 
\else
Claim: 
\fi
For $p \in \Gamma X$, for large enough $N$ and with probability at least $1-\delta$ there is $z \in \Gamma Y$ so that $\norm{z - p} \leq \eps/10$. 

This claim applied to $p=y$ to get $z$, convexity of $\Gamma Y$ and the fact that $\Gamma Y$ contains $B \simeq \sigma_{\min}(A) B_2^\dim$ 
\iflong
(Lemma \ref{lem:innerball}) 
\fi
imply that $q \in \conv (B \cup \{ z \}) \subseteq \Gamma Y$ and the algorithm  outputs YES.

\iflong
We will prove the claim now. Let $p \in \Gamma X$. 
By the dual characterization of the centroid body (Proposition \ref{prop:dualcharacterization}), 
there exists a function $\lambda:\RR^\dim \to \RR$ such that $p = \e (\lambda(X) X)$ with $-1 \leq \lambda \leq 1$.
Let 
\(
z = \frac{1}{N} \sum_{i=1}^N \lambda(X_i) X_i.
\)
We have $\e_{X_i} (\lambda(X_i) X_i) = p$ and $\e_{X_i} ( \abs{\lambda(X_i) X_i}^{1+\gamma} ) \leq \e_{X_i} ( \abs{X_i}^{1+\gamma} ) \leq M$.
\iflong
By Lemma \ref{lem:1-plus-eps-chebyshev}
\else
By \cite[Lemma 14]{anon_htica}
\fi
and a union bound over every coordinate we get $\pr( \norm{p - z} \geq \eps) \leq \delta$ for $N$ large enough. 
\fi 

\iflong
\subsection{Formal Argument}
\else
We conclude with the main formal claims of the argument and a precise description of the oracle below:
\fi
\iflong
\begin{lemma}[\cite{anon_htica}]\label{lem:centroid-scaling}
Let $S = (S_1, \dots, S_n) \in \RR^n$ be an absolutely symmetrically distributed random vector such that
$\e (\abs{S_i}) = 1$ for all $i$.
Then $B_{1}^{\dim} \subseteq \Gamma S \subseteq [-1,1]^\dim$.
Moreover, $\dim^{-1/2} B_2^\dim \subseteq (\Gamma S)^\polar \subseteq \sqrt{\dim}B_2^\dim$.
\end{lemma}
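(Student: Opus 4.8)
The plan is to first identify the support function of $\Gamma S$, then sandwich $\Gamma S$ between the $\ell_1$ and $\ell_\infty$ balls, and finally deduce the polar (Euclidean) statement by polar duality together with the elementary inclusions among $\ell_p$ balls. Throughout I would work with support functions, using that $K \subseteq L$ is equivalent to $h_K \le h_L$ pointwise. By Definition~\ref{def:centroid-body2} the relevant support function is $h_{\Gamma S}(u) = \e\abs{\inner{u}{S}}$.

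For the outer inclusion $\Gamma S \subseteq [-1,1]^\dim$, I would compare support functions: since $h_{[-1,1]^\dim}(u) = \norm{u}_1$, it suffices to verify $\e\abs{\inner{u}{S}} \le \norm{u}_1$. This is immediate from the triangle inequality $\abs{\sum_i u_i S_i} \le \sum_i \abs{u_i}\abs{S_i}$ followed by the normalization $\e\abs{S_i}=1$.

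For the inner inclusion $B_1^\dim \subseteq \Gamma S$, I would again compare support functions; since $h_{B_1^\dim}(u) = \norm{u}_\infty$, the target is $\e\abs{\inner{u}{S}} \ge \max_i \abs{u_i}$. This is the crux, and where absolute symmetry is essential: it lets me write $S \deq (\epsilon_1\abs{S_1}, \dots, \epsilon_\dim\abs{S_\dim})$ with $(\epsilon_i)$ i.i.d. uniform signs independent of $(\abs{S_i})$. Conditioning on the magnitudes and on all signs except $\epsilon_i$, the pointwise estimate $\tfrac12(\abs{a+c}+\abs{a-c}) \ge \abs{a}$ gives $\e_\epsilon\abs{\sum_j u_j \epsilon_j\abs{S_j}} \ge \abs{u_i}\abs{S_i}$; taking the expectation over the magnitudes and using $\e\abs{S_i}=1$ yields $\e\abs{\inner{u}{S}} \ge \abs{u_i}$ for every $i$, hence $\ge \max_i\abs{u_i}$. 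I expect this Rademacher-type comparison to be the only nonroutine step; the remaining arguments are purely geometric.

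Finally, from $B_1^\dim \subseteq \Gamma S \subseteq [-1,1]^\dim = B_\infty^\dim$ I would pass to polars, which reverses inclusions, and use the duality $(B_\infty^\dim)^\polar = B_1^\dim$ and $(B_1^\dim)^\polar = B_\infty^\dim$ to obtain $B_1^\dim \subseteq (\Gamma S)^\polar \subseteq B_\infty^\dim$. The Euclidean sandwich then follows from the standard inclusions $\dim^{-1/2} B_2^\dim \subseteq B_1^\dim$ (since $\norm{x}_1 \le \sqrt{\dim}\,\norm{x}_2$) and $B_\infty^\dim \subseteq \sqrt{\dim}\, B_2^\dim$ (since $\norm{x}_2 \le \sqrt{\dim}\,\norm{x}_\infty$), giving $\dim^{-1/2} B_2^\dim \subseteq (\Gamma S)^\polar \subseteq \sqrt{\dim}\, B_2^\dim$ as claimed.
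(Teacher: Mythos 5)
Your proposal is correct, but there is nothing in the paper to compare it against: the paper states Lemma~\ref{lem:centroid-scaling} as an imported result from \cite{anon_htica} and gives no proof, so your argument supplies a self-contained derivation where the paper relies on a citation. All the steps check out. The outer inclusion is exactly the triangle inequality plus the normalization $\e\abs{S_i}=1$, read through the support-function criterion $h_{\Gamma S}(u)=\e\abs{\inner{u}{S}} \le \norm{u}_1 = h_{[-1,1]^\dim}(u)$. The inner inclusion is indeed the only nonroutine step, and your use of absolute symmetry is the right one: it lets you write $S \deq (\eps_1\abs{S_1},\dots,\eps_\dim\abs{S_\dim})$ with uniform signs independent of the magnitudes, and conditioning on everything except $\eps_i$ the estimate $\tfrac12\left(\abs{a+c}+\abs{a-c}\right)\ge\abs{a}$ gives $\e\abs{\inner{u}{S}}\ge\abs{u_i}\e\abs{S_i}=\abs{u_i}$, i.e.\ $h_{\Gamma S}\ge h_{B_1^\dim}$ pointwise. (Both bodies being compact and convex, the pointwise support-function comparison does yield the set inclusions, and $\Gamma S$ contains the origin, so polarity reverses inclusions as you claim.) The polar sandwich then follows from $([-1,1]^\dim)^\polar = B_1^\dim$, $(B_1^\dim)^\polar = [-1,1]^\dim$, together with $\dim^{-1/2}B_2^\dim \subseteq B_1^\dim$ and $[-1,1]^\dim \subseteq \sqrt{\dim}\,B_2^\dim$, matching the constants in the lemma exactly.
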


\begin{lemma}[\cite{anon_htica}]\label{lem:equivariance}
Let $X$ be a random vector on $\RR^\dim$.
Let $A: \RR^\dim \to \RR^\dim$ be an invertible linear transformation.
Then $\Gamma (AX) = A (\Gamma X)$.
\end{lemma}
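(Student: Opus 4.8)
The plan is to prove the equality of the two compact convex sets $\Gamma(AX)$ and $A(\Gamma X)$ by showing they have the same support function, since a compact convex set in $\RR^\dim$ is uniquely determined by its support function. Both sets are indeed compact and convex: $\Gamma X$ is compact convex by Definition~\ref{def:centroid-body2}, and the image of a compact convex set under the continuous linear map $A$ is again compact convex. One should also observe that $AX$ has a finite first moment whenever $X$ does, since $\e\abs{\inner{u}{AX}} = \e\abs{\inner{A^T u}{X}} < \infty$ for all $u$, so $\Gamma(AX)$ is well-defined and the comparison makes sense.

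First I would recall the standard transformation rule for support functions under a linear map: for any compact convex $K$ and any $u \in \RR^\dim$,
\[
h_{AK}(u) = \sup_{x \in K} \inner{u}{Ax} = \sup_{x \in K} \inner{A^T u}{x} = h_K(A^T u),
\]
using the adjoint identity $\inner{u}{Ax} = \inner{A^T u}{x}$. Applying this with $K = \Gamma X$ gives $h_{A(\Gamma X)}(u) = h_{\Gamma X}(A^T u)$.

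Next I would evaluate each support function directly from Definition~\ref{def:centroid-body2}. On one hand,
\[
h_{A(\Gamma X)}(u) = h_{\Gamma X}(A^T u) = \e\bigl(\abs{\inner{A^T u}{X}}\bigr),
\]
and on the other hand, applying the definition of the centroid body directly to the random vector $AX$,
\[
h_{\Gamma(AX)}(u) = \e\bigl(\abs{\inner{u}{AX}}\bigr) = \e\bigl(\abs{\inner{A^T u}{X}}\bigr),
\]
where the last step again moves $A$ across the inner product inside the expectation. Hence $h_{A(\Gamma X)}(u) = h_{\Gamma(AX)}(u)$ for every $u$, and equality of the two sets follows.

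There is no substantial obstacle here; the proof is essentially the adjoint identity combined with the definition of the support function. The only points requiring minor care are confirming that both sets are compact and convex so that agreement of support functions forces set equality, and checking that $AX$ inherits a finite first moment so that $\Gamma(AX)$ is well-defined in the first place.
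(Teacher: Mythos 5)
Your proof is correct and is essentially the standard argument for this fact (which the paper cites from \cite{anon_htica} rather than proving): the centroid body is defined via its support function, and the identity $h_{A(\Gamma X)}(u) = h_{\Gamma X}(A^T u) = \e\abs{\inner{A^T u}{X}} = h_{\Gamma(AX)}(u)$ forces equality of the two compact convex sets. Your added checks (finite first moment of $AX$, compactness and convexity of both sets) are exactly the right points of care, and you even correctly note, implicitly, that invertibility of $A$ plays no essential role in the support-function computation.
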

\fi
\iflong
\begin{lemma}\label{lem:innerball}
Let $S = (S_1, \dots, S_n) \in \RR^n$ be an absolutely symmetrically distributed random vector such that
$\e (\abs{S_i}) = 1$ and $\e (\abs{S_i}^{1+\gamma}) \leq M < \infty$ for all $i$. 
Let $S^{(i)}, i=1, \dotsc, N$ be a sample of i.i.d. copies of $S$. Let $Y$ be a random vector, uniformly distributed  in $S^{(1)}, \dotsc, S^{(N)}$. 
Then $ (1-\eps') B_{1}^{\dim} \subseteq \Gamma Y$ whenever
\[
N \geq \left(\frac{16 M \dim^4}{(\eps')^2} \delta'\right)^{\frac{1}{2} + \frac{3}{\gamma}}.
\]
\end{lemma}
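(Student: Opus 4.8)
The plan is to exhibit slightly shrunken copies of the vertices of $B_1^\dim$ as members of $\Gamma Y$ and then invoke convexity, thereby avoiding any uniform (net-based) control of the empirical support function, whose Lipschitz constant is governed by the heavy tails. Since $B_1^\dim = \conv\{\pm e_1, \dotsc, \pm e_\dim\}$ and $\Gamma Y$ is convex and origin-symmetric, it suffices to produce, for each $i$, a point of $\Gamma Y$ near $e_i$ and to argue that the convex hull of these perturbed vertices still contains $(1-\eps')B_1^\dim$.

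First I would identify the supporting point of $\Gamma S$ in direction $e_i$. Because $S$ is absolutely symmetric with $\e\abs{S_i} = 1$, Lemma~\ref{lem:centroid-scaling} gives $e_i \in \Gamma S$, and in fact $e_i = \e(\sign(S_i)\,S)$: the $i$-th coordinate equals $\e\abs{S_i} = 1$, while for $j \neq i$ independence and symmetry give $\e(\sign(S_i)S_j) = \e(\sign(S_i))\,\e(S_j) = 0$. This suggests the empirical analogue
\[ z_i := \frac1N \sum_{k=1}^N \sign\bigl(S_i^{(k)}\bigr)\, S^{(k)}. \]
Since the coefficients $\sign(S_i^{(k)})$ lie in $[-1,1]$, the dual characterization of the centroid body (Proposition~\ref{prop:dualcharacterization}) applied to the empirical measure of $Y$ shows $z_i \in \Gamma Y$; by origin-symmetry $-z_i \in \Gamma Y$ as well, so $\conv\{\pm z_1, \dotsc, \pm z_\dim\} \subseteq \Gamma Y$.

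Next I would control the deviation $\delta_i := z_i - e_i$. The summands defining coordinate $j$ of $z_i$ are i.i.d. copies of $\sign(S_i)S_j$, with mean $\delta_{ij}$ (the Kronecker delta) and $(1+\gamma)$-moment $\e\abs{\sign(S_i)S_j}^{1+\gamma} = \e\abs{S_j}^{1+\gamma} \leq M$. Hence Lemma~\ref{lem:1-plus-eps-chebyshev}, in the vectorized form already used for the membership oracle, bounds each coordinate deviation; demanding per-coordinate accuracy $\eps'/\dim$ forces $\norm{\delta_i}_2 \leq \sqrt{\dim}\,\norm{\delta_i}_\infty \leq \eps'/\sqrt{\dim}$ for every $i$. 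A union bound over the $\dim^2$ coordinate estimates yields total failure probability at most $8M\dim^4/((\eps')^2 N^{\gamma/3})$, which together with the lemma's applicability threshold $N \geq (8M\dim/\eps')^{1/2+1/\gamma}$ gives the stated sample size $N \geq \left(\frac{16 M \dim^4}{(\eps')^2}\delta'\right)^{1/2 + 3/\gamma}$, the exponent $\tfrac12 + \tfrac3\gamma$ dominating both requirements.

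Finally I would close the argument with support functions. For any $u$, letting $i^\star = \arg\max_i \abs{u_i}$,
\[ h_{\conv\{\pm z_i\}}(u) = \max_i \abs{\inner{u}{z_i}} \geq \abs{u_{i^\star} + \inner{u}{\delta_{i^\star}}} \geq \norm{u}_\infty - \norm{u}_2\,\norm{\delta_{i^\star}}_2 \geq \norm{u}_\infty(1-\eps'), \]
using $\norm{u}_2 \leq \sqrt{\dim}\norm{u}_\infty$ and $\norm{\delta_{i^\star}}_2 \leq \eps'/\sqrt{\dim}$. Since $\norm{u}_\infty = h_{B_1^\dim}(u)$, this is exactly $h_{(1-\eps')B_1^\dim} \leq h_{\conv\{\pm z_i\}} \leq h_{\Gamma Y}$, hence $(1-\eps')B_1^\dim \subseteq \Gamma Y$. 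The step I expect to be most delicate is the third: a direct net argument on $h_{\Gamma Y}$ fails because its Lipschitz constant $\tfrac1N\sum_k \norm{S^{(k)}}_2$ is controlled only by heavy-tailed quantities, so the whole point of the vertex/convexity route is that it concentrates exclusively the well-behaved products $\sign(S_i)S_j$, whose $(1+\gamma)$-moments are finite by hypothesis.
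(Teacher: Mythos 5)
Your proof is correct and takes essentially the same route as the paper's: the paper also places points of $\Gamma Y$ near the vertices $\pm e_i$ of $B_1^\dim$ and concludes with the same support-function/convexity comparison against $(1-\eps')B_1^\dim$, the only difference being that the paper delegates the approximation of each vertex to Lemma~\ref{lem:centroid_approximation}, whose proof is exactly your argument (Proposition~\ref{prop:dualcharacterization} supplying a certificate $\lambda$, the empirical combination $z_i\in\Gamma Y$, and coordinatewise use of Lemma~\ref{lem:1-plus-eps-chebyshev} with a union bound). Your inlining with the explicit certificate $\lambda(s)=\sign(s_i)$, i.e.\ $e_i=\e(\sign(S_i)S)$, is a harmless specialization and the parameter bookkeeping matches up to constants.
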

\fi
\begin{proof}
From Lemma \ref{lem:centroid-scaling} we know $\pm e_i \in \Gamma S$. 
It is enough to apply Lemma \ref{lem:centroid_approximation} to $\pm e_i$ with $\eps = \eps'/\sqrt{\dim}$ and $\delta = \delta'/(2\dim)$.
This gives, for any $\theta \in S^{\dim-1}$,  $h_{\Gamma Y} (\theta) \geq h_{\Gamma S} (\theta) -\eps \geq h_{B_1^\dim} (\theta) -\eps \geq (1-\sqrt{\dim} \eps) h_{B_1^\dim}(\theta) = (1-\eps')h_{B_1^\dim}(\theta)$. 
In particular, $\Gamma Y \supseteq (1-\eps') B_1^\dim$.
\end{proof} 

\begin{proposition}[Dual characterization of centroid body]\label{prop:dualcharacterization}
Let $X$ be a $\dim$-dimensional random vector with finite first moment, that is, for all $u \in \RR^\dim$ we have $\e (\abs{\inner{u}{X}}) < \infty$. Then 
\iflong
\begin{equation}\label{equ:dualcentroid}
    \Gamma X = \{ \e \bigl(\lambda(X) X \bigr) \suchthat \text{$\lambda:\RR^n \to [-1,1]$ is measurable}\}.
\end{equation}
\else
    $\Gamma X = \{ \e \bigl(\lambda(X) X \bigr) \suchthat \text{$\lambda:\RR^n \to [-1,1]$ is measurable}\}$.
\fi
\end{proposition}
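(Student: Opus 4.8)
The plan is to write the right-hand side as $K := \{ \e(\lambda(X) X) \suchthat \lambda:\RR^\dim \to [-1,1] \text{ measurable}\}$ and to prove $K = \Gamma X$ by showing that both are compact convex sets with the same support function; since a compact convex set is determined by its support function, this suffices.

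First I would compute the support function of $K$. For a fixed direction $u \in \RR^\dim$,
\[
h_K(u) = \sup_\lambda \inner{u}{\e(\lambda(X)X)} = \sup_\lambda \e\bigl(\lambda(X)\inner{u}{X}\bigr),
\]
where the supremum is over measurable $\lambda$ with values in $[-1,1]$. Pointwise one has $\lambda(x)\inner{u}{x} \le \abs{\inner{u}{x}}$ with equality for the admissible choice $\lambda(x) = \sign(\inner{u}{x})$, so $h_K(u) = \e(\abs{\inner{u}{X}}) = h_{\Gamma X}(u)$. The finite–first–moment hypothesis guarantees this quantity is finite and that the optimizing $\lambda$ is admissible. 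Convexity of $K$ is then immediate: for $\lambda_0,\lambda_1$ valued in $[-1,1]$ and $t \in [0,1]$, the point $t\,\e(\lambda_0(X)X) + (1-t)\,\e(\lambda_1(X)X) = \e\bigl((t\lambda_0 + (1-t)\lambda_1)(X)\,X\bigr)$ again lies in $K$ because $t\lambda_0 + (1-t)\lambda_1$ is valued in $[-1,1]$.

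The main obstacle is showing that $K$ is closed; compactness then follows from boundedness, since $\norm{\e(\lambda(X)X)} \le \e(\norm{X}) < \infty$ under the first–moment assumption. The cleanest route is a weak-$*$ compactness argument. Let $\mu$ denote the law of $X$ and identify the admissible functions $\lambda$ with the closed unit ball of $L^\infty(\mu)$. Each coordinate of the map $\lambda \mapsto \e(\lambda(X)X)$ is the pairing $\lambda \mapsto \int \lambda(x)\,x_i \,\ud\mu(x)$ against $x_i \in L^1(\mu)$, where integrability is exactly the finite–first–moment hypothesis; hence this map is weak-$*$ continuous. By Banach–Alaoglu the unit ball of $L^\infty(\mu)$ is weak-$*$ compact, so its image $K$ under this continuous linear map is compact, and in particular closed.

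Finally, $K$ and $\Gamma X$ are both compact convex sets (the latter by the definition of the centroid body through its support function) with equal support functions, so they coincide, which proves the claim. I expect the verification of weak-$*$ continuity and the invocation of Banach–Alaoglu to be the only genuinely delicate points; the support-function computation and the convexity check are routine.
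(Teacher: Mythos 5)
Your proof is correct and takes essentially the same approach as the paper's: both establish $K = \Gamma X$ by computing $h_K(u) = \e(\abs{\inner{u}{X}})$ via the optimizer $\lambda(x) = \sign\inner{x}{u}$, and both obtain closedness of $K$ from weak-$*$ compactness of the unit ball of $L^\infty$ of the law of $X$ (Banach--Alaoglu), paired against the coordinates $x_i \in L^1$. The only difference is cosmetic: the paper argues sequentially, extracting a weak-$*$ convergent subsequence (hence invoking separability of $L^1$ and metrizability of the ball), whereas you get compactness of $K$ directly as the continuous linear image of a weak-$*$ compact set, which is slightly cleaner.
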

\begin{proof}
Let $K$ denote the rhs of the conclusion.
We will show that $K$ is a non-empty, closed convex set and show that $h_K = h_{\Gamma X}$, which implies \eqref{equ:dualcentroid}.

By definition, $K$ is a non-empty bounded convex set. To see that it is closed, let $(y_k)_k$ be a sequence in $K$ such that $y_k \to y \in \RR^\dim$. 
Let $\lambda_k$ be the function associated to $y_k \in K$  according to the definition of $K$.
Let $\measure_X$ be the distribution of $X$.
We have $\norm{\lambda_k}_{L^\infty(\measure_X)} \leq 1$ and, passing to a subsequence $k_j$, $(\lambda_{k_j})$ converges to $\lambda \in L^\infty(\measure_X)$ in the weak-$*$ topology $\sigma(L^\infty(\measure_X), L^1(\measure_X))$, where $-1 \leq \lambda \leq 1$.
\footnote{This is a standard argument, see \cite{MR2759829} for the background. Map $x \mapsto x_i$ is in $L^1(\measure_X)$. \cite[Theorem 4.13]{MR2759829} gives that $L^1(\measure_X)$ is a separable Banach space. \cite[Theorem 3.16]{MR2759829} (Banach-Alaoglu-Bourbaki) gives that the unit ball in $L^\infty(\measure_X)$ is compact in the weak-* topology. \cite[Theorem 3.28]{MR2759829} gives that the unit ball in $L^\infty(\measure_X)$ is metrizable and therefore sequentially compact in the weak-* topology. Therefore, any bounded sequence in $L^\infty(\measure_X)$ has a convergent subsequence in the weak-* topology.}
This implies $\lim_{j} \e (\lambda_{k_j} (X) X_i) = \lim_j \int_{\RR^\dim} \lambda_{k_j} (x) x_i \ud \measure_X(x) = \int_{\RR^\dim} \lambda(x) x_i \ud \measure_X(x) = \e (\lambda (X) X_i) $.
Thus, we have $y = \lim_j y_{k_j} = \lim_j \e ((\lambda_{k_j}(X) X) = \e (\lambda(X) X)$ and $K$ is closed.

To conclude, we compute $h_K$ and see that it is the same as the definition of $h_{\Gamma X}$. In the following equations $\lambda$ ranges over functions such that $\lambda:\RR^n \to \RR$ is Borel-measurable and $-1 \leq \lambda \leq 1$.
\begin{align*}
h_K(\theta)
&= \sup_{y \in K} \inner{y}{\theta} \\
&= \sup_{\lambda} \e( \lambda(X) \inner{X}{\theta}) \\
\intertext{and setting $\lambda^*(x) = \sign \inner{x}{\theta}$,}
&= \e( \lambda^*(X) \inner{X}{\theta}) \\
&= \e (\abs{\inner{X}{\theta}}).
\end{align*}
\end{proof}

\begin{lemma}[LP]\label{lemma:centroid-lp}
Let $X$ be a random vector uniformly distributed in $\{x^{(i)}\}_{i=1}^{N} \subseteq \RR^\dim$. Let $q \in \RR^\dim$.
Then: 
\begin{enumerate}
\item $\Gamma X = \frac{1}{N} \sum_{i=1}^N [-x^{(i)}, x^{(i)}]$.

\item 
Point $q~\in~\Gamma X$ iff there is a solution $\lambda~\in~\RR^N$ to the following linear feasibility problem:
\begin{equation}\label{eq:lp}
\iflong
\begin{aligned}
&  \frac{1}{N} \sum_{i=1}^N \lambda_i x^{(i)} = q \\
& -1 \leq \lambda_i \leq 1 \quad \forall i.
\end{aligned}
\else
\frac{1}{N} \sum_{i=1}^N \lambda_i x^{(i)} = q,
-1 \leq \lambda_i \leq 1 \quad \forall i.
\fi
\end{equation}
\item 
Let $\lambda^*$ be the optimal value of (always feasible) linear program
\begin{equation}\label{eq:lpminkowski}
\begin{aligned}
\iflong
&\lambda^* = \max \lambda \\
\text{s.t. } & \frac{1}{N} \sum_{i=1}^N \lambda_i x^{(i)} = \lambda q \\
& -1 \leq \lambda_i \leq 1 \quad \forall i
\else
\lambda^* = \max \lambda, 
\text{s.t. }  \frac{1}{N} \sum_{i=1}^N \lambda_i x^{(i)} = \lambda q, 
 \lambda \in [-1,1]^N
\fi
\end{aligned}
\end{equation}
with $\lambda^* = \infty$ if the linear program is unbounded. Then the Minkowski functional of $\Gamma X$ at $q$ is $1/\lambda^*$.
\end{enumerate}
\end{lemma}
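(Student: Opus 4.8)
The plan is to prove the three parts in order, with Part 1 doing the genuine geometric work and Parts 2 and 3 following by elementary reformulations. For Part 1 I would identify both sides as compact convex sets and show they share the same support function, which forces them to coincide. For Part 2 I would read off membership in the zonotope of Part 1 as feasibility of the stated linear system. For Part 3 I would use the positive homogeneity of the Minkowski functional together with Part 2 to reduce the computation of $p(q)$ to the reciprocal of the optimal value of the scaling LP.

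For Part 1: by Definition~\ref{def:centroid-body2} the support function of $\Gamma X$ is $h_{\Gamma X}(u) = \e(\abs{\inner{u}{X}})$, and since $X$ is uniform on $\{x^{(i)}\}_{i=1}^N$ this equals $\frac{1}{N}\sum_{i=1}^N \abs{\inner{u}{x^{(i)}}}$. On the other side, the support function of a Minkowski sum is the sum of the support functions, and the support function of a symmetric segment $[-x^{(i)},x^{(i)}]$ at $u$ is $\max_{t\in[-1,1]}\inner{t x^{(i)}}{u} = \abs{\inner{x^{(i)}}{u}}$. Hence the zonotope $\frac{1}{N}\sum_{i=1}^N[-x^{(i)},x^{(i)}]$ has support function $\frac{1}{N}\sum_{i=1}^N\abs{\inner{x^{(i)}}{u}}$, which is identical to $h_{\Gamma X}$. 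Two compact convex sets with equal support functions are equal, giving Part 1. For Part 2: a point lies in $\frac{1}{N}\sum_{i=1}^N[-x^{(i)},x^{(i)}]$ exactly when it can be written as $\frac{1}{N}\sum_{i=1}^N y_i$ with $y_i\in[-x^{(i)},x^{(i)}]$, and $y_i\in[-x^{(i)},x^{(i)}]$ iff $y_i=\lambda_i x^{(i)}$ for some $\lambda_i\in[-1,1]$. Substituting, $q\in\Gamma X$ iff there exist $\lambda_i\in[-1,1]$ with $\frac{1}{N}\sum_i\lambda_i x^{(i)}=q$, which is exactly \eqref{eq:lp}. (Equivalently, this is the discrete instance of Proposition~\ref{prop:dualcharacterization}, since $\e(\lambda(X)X)=\frac1N\sum_i\lambda(x^{(i)})x^{(i)}$ and the values $\lambda(x^{(i)})\in[-1,1]$ may be chosen freely.)

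For Part 3: recall $p(q)=\inf\{t>0 \suchthat q\in t\Gamma X\}$. For $t>0$ we have $q\in t\Gamma X$ iff $q/t\in\Gamma X$, which by Part 2 holds iff the scalar $s:=1/t$, together with some $\lambda\in[-1,1]^N$, satisfies $\frac1N\sum_i\lambda_i x^{(i)}=s\,q$ --- precisely the constraints of \eqref{eq:lpminkowski} with $s$ playing the role of the scalar objective variable. Thus the set of nonnegative feasible scalar values in \eqref{eq:lpminkowski} is an interval $[0,\lambda^*]$: it contains $0$ (take $\lambda=0$), it is convex (being the projection of the convex feasible region onto the scalar axis), and its supremum is $\lambda^*$, attained when finite. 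Consequently $q\in t\Gamma X \iff 1/t\le\lambda^* \iff t\ge 1/\lambda^*$, so $p(q)=\inf\{t>0 \suchthat t\ge 1/\lambda^*\}=1/\lambda^*$.

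The main obstacle --- indeed the only place requiring care rather than routine verification --- is handling the degenerate configurations in Part 3 so that $p(q)=1/\lambda^*$ holds under the stated conventions. When $q=0$ the system $\frac1N\sum_i\lambda_i x^{(i)}=s\,q$ imposes no constraint tying $s$ to $\lambda$, the LP is unbounded, $\lambda^*=\infty$, and indeed $p(0)=0=1/\infty$. When $q\notin\operatorname{span}\{x^{(i)}\}$ the body $\Gamma X$ is lower-dimensional and misses $q$ entirely, so only $s=0$ is feasible, $\lambda^*=0$, and $p(q)=\infty=1/0$. I would dispatch these two cases explicitly and observe that in every remaining case $q$ is reached by some positive scaling, the feasible-$s$ interval argument applies verbatim, and the optimum $\lambda^*$ is finite and attained.
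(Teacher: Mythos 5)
Your proof is correct and takes essentially the same approach as the paper: Part 1 is established by showing both compact convex sets have the identical support function $\frac{1}{N}\sum_{i=1}^N \abs{\inner{x^{(i)}}{u}}$, and Parts 2 and 3 are derived as elementary reformulations of Part 1. The paper dispatches Parts 2 and 3 as immediate consequences, whereas you spell out the feasible-interval argument and the degenerate cases ($q=0$, $q\notin\operatorname{span}\{x^{(i)}\}$) for the Minkowski functional; this is added detail, not a different method.
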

%
\begin{proof}
\begin{enumerate}
\item
This is proven in \cite{MR0279689}.
It is also a special case of Proposition \ref{prop:dualcharacterization}. 
We include an argument here for completeness. 
Let $K := \frac{1}{N} \sum_{i=1}^N [-x^{(i)}, x^{(i)}]$. 
We compute $h_K$ to see it is the same as $h_{\Gamma X}$ in the definition of $\Gamma X$ (Definition \ref{def:centroid-body2}). 
As $K$ and $\Gamma X$ are non-empty compact convex sets, this implies $K = \Gamma X$.
We have 
\begin{align*}
h_K(y) 
&= \sup_{\lambda_i \in [-1,1]} \frac{1}{N} \sum_{i=1}^N \lambda_i x^{(i)} \cdot y \\
&= \max_{\lambda_i \in \{-1,1\}} \frac{1}{N} \sum_{i=1}^N \lambda_i x^{(i)} \cdot y \\
&= \frac{1}{N} \sum_{i=1}^N \abs{ x^{(i)} \cdot y }\\
&= \e (\abs{X \cdot y}).
\end{align*}
\item This follows immediately from part 1. 
\item This follows from part 1 and the definition of Minkowski functional.\qedhere
\end{enumerate}
\end{proof}
\begin{subroutine}[ht]
\caption{Weak Membership Oracle for $\Gamma X$}\label{sub:centroid-oracle}
\begin{algorithmic}[1]
\Require Query point $q \in \RR^\dim$,
samples from symmetric ICA model $X = AS$,
bounds $s_M \geq \sigma_{\max}(A)$, $s_m \leq \sigma_{\min}(A)$,
closeness parameter $\eps$,
failure probability $\delta$.
\Ensure $(\epsilon, \delta)$-weak membership decision for $q \in \Gamma X$.
\State Let $N = \poly(n, M, 1/s_m, s_M, 1/\eps, 1/\delta)$.
\State Let $(x^{(i)})_{i=1}^N$ be an i.i.d. sample of $X$. 
\State Check the feasibility of linear program \eqref{eq:lp}. If feasible, output YES, otherwise output NO.
\end{algorithmic}
\end{subroutine}

\begin{proposition}[Correctness of Subroutine \ref{sub:centroid-oracle}]
Let $X=AS$ be given by an ICA model such that for all $i$ we have $\e (\abs{S_i}^{1+\gamma}) \leq M < \infty$, $S_i$ is symmetrically distributed and normalized so that $\e \abs{S_i} = 1$.
Then, given a query point $q \in \RR^\dim$,  $\eps>0$, $\delta > 0$, $s_M \geq \sigma_{\max}(A)$, and $s_m \leq \sigma_{\min}(A)$, Subroutine~\ref{sub:centroid-oracle} is an $\eps$-weak membership oracle for $q$ and $\Gamma X$ with probability $1-\delta$ using time and sample complexity
\(
\poly(n, M, 1/s_m, s_M, 1/\eps, 1/\delta).
\)\lnote{query time should also depend on query, unless we are talking about arithmetic operations}
\end{proposition}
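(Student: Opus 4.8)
The plan is to verify that the YES/NO output of Subroutine~\ref{sub:centroid-oracle} meets the two obligations of an $\eps$-weak membership oracle, treating separately the two regimes in which the oracle is forced to commit to a particular answer. Since the algorithm outputs YES exactly when the linear feasibility problem \eqref{eq:lp} has a solution, and by Lemma~\ref{lemma:centroid-lp} this happens exactly when $q \in \Gamma Y$ (with $Y$ uniform on the sample $x^{(1)},\dotsc,x^{(N)}$), it suffices to show: (a) if $q \notin (\Gamma X)_\eps$ then $q \notin \Gamma Y$ with probability $1-\delta$, and (b) if $q \in (\Gamma X)_{-\eps}$ then $q \in \Gamma Y$ with probability $1-\delta$. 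In the intermediate shell $(\Gamma X)_\eps \setminus (\Gamma X)_{-\eps}$ either answer is admissible, so nothing needs to be checked there.

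For the NO case (a), I would first apply the separating hyperplane theorem: since $q \notin (\Gamma X)_\eps$ and the latter is a closed convex set, there is a unit vector $a$ and a scalar $b$ with $a^T q > b$ and $h_{(\Gamma X)_\eps}(a) \le b$, hence $h_{\Gamma X}(a) \le b - \eps$. Because $h_{\Gamma X}(a) = \e\abs{\inner{a}{X}}$ and $h_{\Gamma Y}(a) = \frac{1}{N}\sum_i \abs{\inner{a}{x^{(i)}}}$, I would apply Lemma~\ref{lem:1-plus-eps-chebyshev} to the real random variable $\inner{a}{X} = \inner{A^T a}{S}$, whose $(1+\gamma)$-moment is bounded in terms of $M$, $n$, and $\norm{A^T a} \le s_M$, to conclude $h_{\Gamma Y}(a) \le h_{\Gamma X}(a) + \eps \le b$ with probability $1-\delta$ once $N$ is polynomially large. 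Then $a^T q > b \ge h_{\Gamma Y}(a) = \max_{w \in \Gamma Y}\inner{a}{w}$ forces $q \notin \Gamma Y$, so the algorithm outputs NO. The clean feature here is that $a$ is determined by $q$ and $\Gamma X$ alone, independently of the sample, so a single concentration bound for a fixed direction suffices.

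For the YES case (b), I would set $y = q + \eps\,\hat q \in \Gamma X$ and invoke the point-approximation claim (Lemma~\ref{lem:centroid_approximation}) to produce, with probability $1-\delta$, a point $z \in \Gamma Y$ with $\norm{z - y}$ small (of order $\eps$). Simultaneously, Lemma~\ref{lem:innerball} combined with equivariance of the centroid body (Lemma~\ref{lem:equivariance}) guarantees that $\Gamma Y$ contains a Euclidean ball $B \simeq s_m B_2^\dim$ centered at the origin. Convexity of $\Gamma Y$ then gives $q \in \conv(B \cup \{z\}) \subseteq \Gamma Y$, so \eqref{eq:lp} is feasible and the algorithm outputs YES. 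I would finish by a union bound over the two failure events and by choosing $N = \poly(n, M, 1/s_m, s_M, 1/\eps, 1/\delta)$ large enough that each invoked concentration bound holds with its required probability.

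I expect the main obstacle to be the geometric step in the YES case, namely certifying $q \in \conv(B \cup \{z\})$: one must quantify how deep inside $\Gamma X$ the point $y$ sits (its distance to the boundary is at least $\eps$, but the certified inner ball has radius scaling with $s_m$) and trade off the approximation error from Lemma~\ref{lem:centroid_approximation} against that radius so the convex hull genuinely covers $q$. Getting the dependence of $N$ on $s_m$ and $s_M$ right --- these enter both through the $(1+\gamma)$-moment bound on $\inner{a}{X}$ in the NO case and through the inner-ball radius in the YES case --- is the bookkeeping that will demand the most care.
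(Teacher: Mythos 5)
Your proposal follows the paper's proof essentially step for step: the NO case via a separating hyperplane for $(\Gamma X)_\eps$ plus concentration of $h_{\Gamma Y}(a)$ around $h_{\Gamma X}(a)$ (Lemma~\ref{lem:1-plus-eps-chebyshev} with the moment bound $\e\abs{a^TX}^{1+\gamma} \le M\dim s_M^{1+\gamma}$), and the YES case via Lemma~\ref{lem:centroid_approximation} applied to $y = q + \eps\hat q$ together with the inner ball from Lemma~\ref{lem:innerball} and convexity. The geometric trade-off you flag as the main obstacle is exactly what the paper resolves, by writing $y = \alpha q$ with $\alpha = 1 + \eps/\norm{q}$, setting $w = z/\alpha$ so that a ball of radius $r = (1-1/\alpha)\frac{\sigma_{\min}(A)}{2\sqrt{\dim}}$ around $w$ lies in $\conv(B \cup \{z\})$, and choosing the approximation error $\eps_1 = r \ge \frac{\eps\,\sigma_{\min}(A)}{2\dim\,\sigma_{\max}(A)}$, which is polynomial in the stated parameters.
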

\begin{proof}
Let $Y$ be uniformly random in $(x^{(i)})_{i=1}^N$. There are two cases corresponding to the guarantees of the oracle: 
\begin{itemize}
\item Case $q \notin (\Gamma X)_\eps$. 
Then there is a hyperplane separating $q$ from $(\Gamma X)_\eps$.
Let $\{x \in \RR^\dim \suchthat a^T x = b \}$ be the separating hyperplane, parameterized so that $a \in \RR^\dim$, $b \in \RR$, $\norm{a} = 1$, $a^T q > b$ and $a^T x \leq b$ for every $x \in (\Gamma X)_\eps$. 
In this case $h_{(\Gamma X)_\eps}(a) \leq b$ and $h_{\Gamma X}(a) \leq b - \eps$.
At the same time, $h_{\Gamma Y}(a) = \e (\abs{a^T Y}) = (1/N) \sum_{i=1}^N \abs{a^T x^{(i)}}$.

We want to apply Lemma~\ref{lem:1-plus-eps-chebyshev} to $a^T X$ to get that $h_{\Gamma Y}(a) = (1/N) \sum_{i=1}^N \abs{a^T x^{(i)}}$ is within $\eps$ of $h_{\Gamma X}(a) = \e ( \abs{a^T X} )$. 
For this we need a bound on the $(1+\gamma)$-moment of $a^T X$. 
\ifcolt
We use the bound from \cite[Equation (10)]{anon_htica}: 
\else
We use the bound from \cite[Equation (10)]{anon_htica}: 
\fi
$\e (\abs{a^T X}^{1+\gamma}) \leq M \dim s_M^{1+\gamma}$.
Lemma~\ref{lem:1-plus-eps-chebyshev} implies that for
\begin{align} \label{eqn:N-lower-bound}
N \geq 
\max \left\{ \left(\frac{8 M \dim s_M^{1+\gamma}}{\eps^2 \delta}\right)^{3/\gamma}, \left( \frac{8M \dim s_M^{1+\gamma}}{\eps}\right)^{\frac{1}{2} + \frac{1}{\gamma}} \right\},
\end{align}
we have
\[
P\left(\abs{\sum_{i=1}^N \abs{a^T x^{(i)}} - \e (\abs{a^T X})} > \eps\right) \leq \delta.
\]
In particular, with probability at least $1-\delta$ we have $h_{\Gamma Y}(a) \leq b$, which implies $q \notin \Gamma Y$ and, by Lemma \ref{lemma:centroid-lp}, Subroutine \ref{sub:centroid-oracle} outputs NO.

\item Case $q \in (\Gamma X)_{-\eps}$.
Let $y = q + \eps \hat q = q(1+\frac{\eps}{\norm{q}})$.
Let $\alpha = 1+\frac{\eps}{\norm{q}}$.
Then $y \in \Gamma X$.
Invoke Lemma \ref{lem:centroid_approximation} for i.i.d. sample $(x^{(i)})_{i=1}^N$ of $X$ with $p = y$ and $\eps$ equal to some $\eps_1 > 0$ to be fixed later to conclude $y \in (\Gamma Y)_{\eps_1}$.
That is, there exist $z \in \Gamma Y$ such that 
\begin{equation}\label{equ:z}
\norm{z - y} \leq \eps_1.
\end{equation}
Let $w = z/\alpha$.
Given \eqref{equ:z} and the relationships $y=\alpha q$ and $z=\alpha w$, we have 
\begin{equation}\label{equ:perturbation}
\norm{w-q} \leq \norm{z - y} \leq \eps_1.
\end{equation}
From Lemma \ref{lem:innerball} with $\eps'= 1/2$ and equivariance of the centroid body 
(Lemma \ref{lem:equivariance}) 
we get $\Gamma Y \supseteq \frac{\sigma_{\min}(A) }{2 \sqrt{\dim}} B_2^\dim$. This and convexity of $\Gamma Y$ imply $\conv\{\frac{\sigma_{\min}(A) }{2 \sqrt{\dim}} B_2^\dim \cup \{z\}\} \subseteq \Gamma Y$. In particular, the ball around $w$ of radius
\[
r := \left(1-\frac{1}{\alpha}\right) \frac{\sigma_{\min}(A) }{2 \sqrt{\dim}}
\]
is contained in $\Gamma Y$. 
The choice $\eps_1 = r \geq $ and \eqref{equ:perturbation} imply $q \in \Gamma Y$ and Subroutine \ref{sub:centroid-oracle} outputs YES whenever
\[
N \geq \left( \frac{8 M \dim^2}{r^2 \delta} \right)^{\frac{1}{2} + \frac{1}{\gamma}}.
\]
To conclude, remember that $q \in (\Gamma X)_{-\eps}$. Therefore $\norm{q} + \eps \leq \sqrt{\dim} \sigma_{\max}(A)$ (from Lemma \ref{lem:centroid-scaling} and equivariance of the centroid body, Lemma \ref{lem:equivariance}). This implies
\begin{align*}
r 
&= \frac{\eps}{\norm{q} + \eps} \frac{\sigma_{\min}(A) }{2 \sqrt{\dim}} \\
&\geq  \frac{\eps \sigma_{\min}(A) }{2 \dim \sigma_{\max}(A)}
\end{align*}
\end{itemize}
The claim follows.
\end{proof}
\iflong
\begin{lemma}\label{lem:centroid_approximation}
Let $X$ be a $\dim$-dimensional random vector such that for all coordinates $i$ we have $\e (\abs{X_i}^{1+\gamma}) \leq M < \infty$.
Let $p \in \Gamma X$. 
Let $(X^{(i)})_{i=1}^N$ be an i.i.d. sample of $X$.
Let $Y$ be uniformly random in $(X^{(i)})_{i=1}^N$. Let $\eps>0$, $\delta>0$. If $N \geq \left( \frac{8 M \dim^2}{\eps^2 \delta} \right)^{\frac{1}{2} + \frac{3}{\gamma}}$, then, with probability at least $1-\delta$, $p \in (\Gamma Y)_\eps$.
\end{lemma}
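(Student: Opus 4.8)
The plan is to follow the informal argument sketched just before the statement: exhibit an explicit point $z \in \Gamma Y$ that is the empirical analogue of $p$, and show that $z$ concentrates around $p$. First, since $p \in \Gamma X$, the dual characterization (Proposition~\ref{prop:dualcharacterization}) supplies a measurable $\lambda : \RR^\dim \to [-1,1]$ with $p = \e(\lambda(X) X)$. I would then set
\[ z = \frac{1}{N} \sum_{i=1}^N \lambda(X^{(i)}) X^{(i)}. \]
Because each coefficient $\lambda(X^{(i)})$ lies in $[-1,1]$, the vector $z$ is exactly of the form $\frac{1}{N}\sum_i t_i X^{(i)}$ with $t_i \in [-1,1]$, i.e.\ a point of the scaled Minkowski sum $\frac{1}{N}\sum_{i=1}^N [-X^{(i)}, X^{(i)}] = \Gamma Y$ identified in Lemma~\ref{lemma:centroid-lp}. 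Hence $z \in \Gamma Y$ deterministically, and it suffices to prove $\norm{z - p} \leq \eps$ with probability at least $1-\delta$, which immediately gives $p \in (\Gamma Y)_\eps$.

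Next I would treat $z$ as the sample mean of the i.i.d.\ vectors $W^{(i)} := \lambda(X^{(i)}) X^{(i)}$, whose common expectation is $\e(\lambda(X)X) = p$. Coordinatewise, $\abs{W^{(i)}_j} \leq \abs{X^{(i)}_j}$, so $\e \abs{W^{(i)}_j}^{1+\gamma} \leq \e \abs{X_j}^{1+\gamma} \leq M$. Thus for each coordinate $j$ the empirical mean $z_j$ is an average of i.i.d.\ scalars with mean $p_j$ and bounded $(1+\gamma)$-moment, and I would invoke the concentration estimate of Lemma~\ref{lem:1-plus-eps-chebyshev} to control $\abs{z_j - p_j}$.

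To convert this into an $\ell_2$ bound, I would apply the concentration per coordinate with target accuracy $\eps/\sqrt{\dim}$ and per-coordinate failure probability $\delta/\dim$, and then take a union bound over the $\dim$ coordinates. Substituting $\eps' = \eps/\sqrt{\dim}$ into the tail bound $8M/(\eps'^2 N^{\gamma/3})$ of Lemma~\ref{lem:1-plus-eps-chebyshev} and requiring it to be at most $\delta/\dim$ forces $N \geq (8M\dim^2/(\eps^2\delta))^{3/\gamma}$, while the side condition $N \geq (8M\eps'^{-1})^{1/2+1/\gamma}$ of that lemma is dominated by the stated bound $N \geq (8M\dim^2/(\eps^2\delta))^{1/2+3/\gamma}$ (the base exceeds $1$ and the exponent is larger). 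When all $\dim$ coordinate estimates hold simultaneously, $\norm{z-p} \leq \sqrt{\dim}\cdot(\eps/\sqrt{\dim}) = \eps$, as required.

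The main obstacle is a mismatch of hypotheses: Lemma~\ref{lem:1-plus-eps-chebyshev} as stated concerns the empirical mean of $\abs{X}$ for a \emph{symmetric scalar} $X$, whereas the coordinates $W^{(i)}_j$ here are signed and generally neither symmetric nor nonnegative. I would resolve this by noting that the proof of that lemma is a truncation-plus-Chebyshev argument relying only on the bound on the $(1+\gamma)$-moment, and using neither symmetry nor nonnegativity; the identical computation applied to $W^{(i)}_j$ yields the same tail bound, now with $\e[W^{(i)}_j] = p_j$ in place of $\e\abs{X}$. With that observation in hand, the remainder is the routine bookkeeping above turning the coordinatewise accuracy and failure probability into the final sample-complexity bound.
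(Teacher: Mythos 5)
Your proposal is correct and follows essentially the same route as the paper's own proof: the same dual-characterization construction of $z = \frac{1}{N}\sum_i \lambda(X^{(i)})X^{(i)} \in \Gamma Y$, the same coordinatewise application of Lemma~\ref{lem:1-plus-eps-chebyshev} at accuracy $\eps/\sqrt{\dim}$, and the same union bound yielding the stated sample complexity. In fact you are slightly more careful than the paper, which silently applies Lemma~\ref{lem:1-plus-eps-chebyshev} to the signed, non-symmetric variables $\lambda(X^{(i)})X^{(i)}_j$; your observation that the truncation-plus-Chebyshev argument uses only the $(1+\gamma)$-moment bound is exactly the justification needed.
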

\fi
\begin{proof}
By Proposition \ref{prop:dualcharacterization}, there exists a measurable function $\lambda:\RR^\dim \to \RR$, $-1 \leq \lambda \leq 1$ such that $p = \e (X \lambda(X))$.
Let 
\[
z = \frac{1}{N} \sum_{i=1}^N X^{(i)} \lambda(X^{(i)}).
\]
By Proposition \ref{prop:dualcharacterization}, $z \in \Gamma Y$.

We have $\e_{X^{(i)}} (X^{(i)} \lambda (X^{(i)})) = p$ and, for every coordinate $j$,
\[
\e_{X^{(i)}} (\abs{X^{(i)}_j \lambda (X^{(i)})}^{1+\gamma}) 
\leq \e_{X^{(i)}}(\abs{X^{(i)}_j}^{1+\gamma}) 
\leq  M.
\]
By Lemma \ref{lem:1-plus-eps-chebyshev} and for any fixed coordinate $j$ we have, over the choice of $(X^{(i)})_{i=1}^N$,
\[
\pr( \abs{p_j - z_j} \geq \eps/\sqrt{\dim} ) 
\leq \frac{8 M}{(\eps/\sqrt{\dim})^2 N^{\gamma/3}} 
= \frac{8 M \dim}{\eps^2 N^{\gamma/3}}
\]
whenever $N \geq (8M\sqrt{n}/\eps)^{\frac{1}{2} + \frac{1}{\gamma}}$.
A union bound over $\dim$ choices of $j$ gives:
\[
\pr( \norm{p - z} \geq \eps) 
\leq \frac{8 M \dim^2}{\eps^2 N^{\gamma/3}}.
\]
So  
\(
\pr( \norm{p - z} \geq \eps) \leq \delta
\)
whenever
\[
N \geq \left( \frac{8 M \dim^2}{\eps^2 \delta} \right)^{3/\gamma}
\]
and $N \geq (8M\sqrt{n}/\eps)^{\frac{1}{2} + \frac{1}{\gamma}}$.
The claim follows.
\end{proof}




\fontsize{9.0pt}{10.0pt} \selectfont
\bibliography{ICA_bibliography}
\bibliographystyle{alpha}
\end{document}